\def\a{{\bf a}}
\def\b{{\bf b}}
\def\p{{\bf p}}
\def\u{{\bf u}}
\def\v{{\bf v}}
\def\w{{\bf w}}
\def\x{{\bf x}}
\def\y{{\bf y}}
\def\B{{\bf B}}
\def\P{{\bf P}}
\def\0{{\bf 0}}
\def\1{{\bf 1}}
\def\2{{\bf 2}}
\def\3{{\bf 3}}
\def\4{{\bf 4}}
\def\5{{\bf 5}}
\def\6{{\bf 6}}
\def\7{{\bf 7}}
\def\8{{\bf 8}}
\def\9{{\bf 9}}
\def\EB{{\mathbb E}}
\def\RB{{\mathbb R}}
\newtheorem{theorem}{Theorem}
\newtheorem{lemma}{Lemma}
\newtheorem{assumption}{Assumption}
\begin{document}
%
\title{Lock-Free Optimization for Non-Convex Problems}
\author{Shen-Yi Zhao, Gong-Duo Zhang \and Wu-Jun Li\\
National Key Laboratory for Novel Software Technology \\
Department of Computer Science and Technology, Nanjing University, China \\
\texttt{\{zhaosy, zhanggd\}@lamda.nju.edu.cn, liwujun@nju.edu.cn}
}
\maketitle
\begin{abstract}
Stochastic gradient descent~(SGD) and its variants have attracted much attention in machine learning due to their efficiency and effectiveness for optimization. To handle large-scale problems, researchers have recently proposed several lock-free strategy based parallel SGD~(LF-PSGD) methods for multi-core systems. However, existing works have only proved the convergence of these LF-PSGD methods for convex problems. To the best of our knowledge, no work has proved the convergence of the LF-PSGD methods for non-convex problems. In this paper, we provide the theoretical proof about the convergence of two representative LF-PSGD methods, Hogwild! and AsySVRG, for non-convex problems. Empirical results also show that both Hogwild! and AsySVRG are convergent on non-convex problems, which successfully verifies our theoretical results.
\end{abstract}

\section{Introduction}
Many machine learning models can be formulated as the following optimization problem:
\begin{align}\label{problem}
	\underset{\w}{\min}~\frac{1}{n}\sum_{i=1}^n f_i(\w),
\end{align}
where $\w$ is the parameter to learn~(optimize), $n$ is the number of training instances, $f_i(\w)$ is the loss defined on instance $i$. For example, assuming we are given a set of labeled instances
$\{(\x_i, y_i)|i=1,2,\ldots, n\}$, where $\x_i \in \RB^d$ is the
feature vector and $y_i \in \{1,-1\}$ is the label of $\x_i$, $f_i(\w)$ can be $\log(1+e^{-y_i\x_i^T\w}) + \frac{\lambda}{2}\|\w\|^2$ which is known as the regularized loss in logistic regression~(LR). We can also take $f_i(\w)$ to be $\max\{0, 1-y_i\x_i^T\w\} + \frac{\lambda}{2}\|\w\|^2$ which is known as the regularized loss in support vector machine~(SVM). Here, $\lambda$ is the regularization hyper-parameter. Moreover, many other machine learning models, including neural networks~\cite{DBLP:conf/nips/KrizhevskySH12}, matrix factorization~\cite{DBLP:journals/computer/KorenBV09}, and principal component analysis~(PCA)~\cite{pca} and so on, can also be formulated as that in~(\ref{problem}).

When the problem in~(\ref{problem}) is large-scale, i.e., $n$ is large, researchers have recently proposed stochastic gradient descent~(SGD) and its variants like SVRG~\cite{DBLP:conf/nips/Johnson013} to solve it. Many works~\cite{DBLP:conf/nips/RouxSB12,DBLP:journals/jmlr/Shalev-Shwartz013,DBLP:conf/nips/Johnson013} have found that SGD-based methods can achieve promising performance in large-scale learning problems. According to the implementation platforms or systems, existing SGD-based methods can be divided into three categories: sequential SGD~(SSGD) methods, parallel SGD~(\mbox{PSGD}) methods, and distributed SGD~(DSGD) methods. SSGD methods are designed for a single thread on a single machine, PSGD methods are designed for multi-core~(multi-thread) on a single machine with a shared memory\footnote{In some literatures, PSGD refers to the methods implemented on both multi-core and multi-machine systems. In this paper, PSGD only refers to the methods implemented on multi-core systems with a shared memory.}, and DSGD methods are designed for multiple machines.



When the problem in~(\ref{problem}) is convex, the SGD methods, including SSGD~\cite{DBLP:conf/nips/RouxSB12,DBLP:journals/jmlr/Shalev-Shwartz013,DBLP:conf/nips/Johnson013}, PSGD~\cite{recht2011hogwild} and DSGD~\cite{DBLP:conf/nips/JaggiSM14,DBLP:conf/osdi/LiAPSAJLSS14,DBLP:conf/kdd/XingHDKWLZXKY15,Zhang2015distributed}, have achieved very promising empirical performance. Furthermore, good theoretical results about the convergence of the SGD methods are also provided by these existing works.


In many real applications, the problems to optimize can be non-convex. For example, the problems for the neural networks are typically non-convex. Because many researchers~~\cite{DBLP:conf/osdi/LiAPSAJLSS14,DBLP:conf/kdd/XingHDKWLZXKY15} find that the SGD methods can also achieve good empirical results for non-convex problems, theoretical proof about the convergence of SGD methods for non-convex problems has recently attracted much attention. Some progress has been achieved. For example, the works in~\cite{DBLP:journals/siamjo/GhadimiL13a,sashanksvrg16,xingguolisvrg16,zeyuansvrg16,zeyuanimprovedsvrg16} have proved the convergence of the sequential SGD and its variants for non-convex problems. There are also some other theoretical results for some particular non-convex problems, like PCA~\cite{pca,ohadpca16sgd,ohadpca16svrg} and matrix factorization~\cite{DBLP:conf/icml/SaRO15}. But all these works are only for SSGD methods.

There have appeared only two works~\cite{DBLP:conf/nips/LianHLL15,DBLP:journals/corr/HuoH16} which propose PSGD methods for non-convex problems with theoretical proof of convergence. However, the PSGD methods in~\cite{DBLP:conf/nips/LianHLL15} need write-lock or atomic operation for the memory to prove the convergence~\footnote{Although the implementation of AsySG-incon in~\cite{DBLP:conf/nips/LianHLL15} is lock-free, the theoretical analysis about the convergence of AsySG-incon is based on an assumption that no over-writing happens, i.e., the theoretical analysis is not for the lock-free case.}. Similarly, the work in~\cite{DBLP:journals/corr/HuoH16} also does not prove the convergence for the lock-free case in our paper. Recent works~\cite{recht2011hogwild,DBLP:conf/nips/ChaturapDC15,journals/corr/ReddiHSPS15,DBLP:conf/aaai/ZhaoL16} find that lock-free strategy based parallel SGD~(LF-PSGD) methods can empirically outperform lock-based PSGD methods for multi-core systems. Although some existing works~\cite{DBLP:conf/nips/ChaturapDC15,DBLP:conf/aaai/ZhaoL16} have proved the convergence of these LF-PSGD methods for convex problems, no work has proved the convergence of the LF-PSGD methods for non-convex problems.

In this paper, we provide the theoretical proof about the convergence of two representative LF-PSGD methods, Hogwild!~\cite{recht2011hogwild,DBLP:conf/nips/ChaturapDC15} and AsySVRG~\cite{DBLP:conf/aaai/ZhaoL16}, for non-convex problems. The contribution of this work can be outlined as follows:
\begin{itemize}
  \item Theoretical results show that both Hogwild! and AsySVRG can converge with lock-free strategy for non-convex problems.
  \item Hogwild! gets a convergence rate of $O(1/\sqrt{\tilde{T}})$ for non-convex problems, where $\tilde{T} = p\times T$ is the total iteration number of $p$ threads.
  \item AsySVRG gets a convergence rate of $O(1/\tilde{T})$ for non-convex problems.
  \item To get an $\epsilon$-local optimal solution for AsySVRG, the computation complexity by all threads is $O(n^{\frac{2}{3}}/\epsilon)$, or equivalently the computation complexity of each thread is $O(\frac{n^{\frac{2}{3}}}{p\epsilon})$. This is faster than traditional parallel gradient decent methods whose computation complexity is $O(\frac{n}{p\epsilon})$ for each thread.
  \item Empirical results also show that both Hogwild! and AsySVRG are convergent on non-convex problems, which successfully verifies our theoretical results.
\end{itemize}

\section{Preliminary}
We use $f(\w)$ to denote the objective function in (\ref{problem}), which means $f(\w) = \frac{1}{n}\sum_{i=1}^n f_i(\w)$. And we use $\| \cdot \|$ to denote the $L_2$-norm $\| \cdot \|_2$.

\begin{assumption}\label{ass:smooth}
	The function $f_i(\cdot)$ in (\ref{problem}) is smooth, which means that there exists a constant $L>0$, $\forall \a, \b$,
	\begin{align}
		f_i(\b) \leq f_i(\a) + \nabla f_i(\a)^T(\b-\a) + \frac{L}{2}\| \b - \a \|^2, \nonumber
	\end{align}
	or equivalently
	\begin{align}
		\| \nabla f_i(\b) - \nabla f_i(\a)\| \leq L \| \b -\a \|. \nonumber
	\end{align}
\end{assumption}
This is a common assumption for the convergence analysis of most existing gradient-based methods.

Since we focus on non-convex problems in this paper, it is difficult to get the global solution of~(\ref{problem}) based on the gradient methods. Hence, we use $\| \nabla f(\w) \|^2$ to measure the convergence instead of $f(\w) - \underset{\w}{\min}~f(\w)$.

Here, we give a Lemma which is useful in the convergence analysis of Hogwild! and AsySVRG.

\begin{lemma}\label{techlemma:tech}
Assume $\B$ is a positive semi-definite matrix with the largest eigenvalue less than or equal to 1 and the minimum eigenvalue $\alpha >0$, we have: $\forall \x,\y$,
\begin{align}
-\nabla f(\x)^T \B \nabla f(\y) \leq \frac{L^2}{2}\left\| \x - \y \right\|^2 - \frac{\alpha}{2}\left\| \nabla f(\x) \right\|^2 . \nonumber
\end{align}
\end{lemma}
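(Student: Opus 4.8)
The plan is to exploit the positive-semidefinite structure of $\B$ by passing to its symmetric square root $\B^{1/2}$, which converts the bilinear form $\nabla f(\x)^T\B\nabla f(\y)$ into a genuine inner product of two transformed vectors, after which a one-line polarization identity does the work.

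First I would record that $f$ itself is $L$-smooth: since $f=\frac1n\sum_{i=1}^n f_i$ and each $f_i$ satisfies Assumption~\ref{ass:smooth}, the triangle inequality gives $\|\nabla f(\x)-\nabla f(\y)\|\le \frac1n\sum_{i=1}^n\|\nabla f_i(\x)-\nabla f_i(\y)\|\le L\|\x-\y\|$. Next, write $\B=\B^{1/2}\B^{1/2}$ and set $\u=\B^{1/2}\nabla f(\x)$, $\v=\B^{1/2}\nabla f(\y)$, so that $-\nabla f(\x)^T\B\nabla f(\y)=-\u^T\v$. Then apply
\[
-\u^T\v=\frac12\|\u-\v\|^2-\frac12\|\u\|^2-\frac12\|\v\|^2\le \frac12\|\u-\v\|^2-\frac12\|\u\|^2 .
\]
For the first term, $\|\u-\v\|^2=\|\B^{1/2}(\nabla f(\x)-\nabla f(\y))\|^2\le \|\B^{1/2}\|^2\,\|\nabla f(\x)-\nabla f(\y)\|^2\le L^2\|\x-\y\|^2$, using that the largest eigenvalue of $\B$ (hence $\|\B^{1/2}\|^2$) is at most $1$ together with the $L$-smoothness of $f$. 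For the second term, $\|\u\|^2=\nabla f(\x)^T\B\nabla f(\x)\ge \alpha\|\nabla f(\x)\|^2$ because $\alpha$ is the minimum eigenvalue of $\B$. Combining the two estimates yields exactly the claimed inequality.

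The one decision that matters — really the only place a careless approach goes wrong — is to decompose through $\B^{1/2}$ rather than to split $\nabla f(\y)=\nabla f(\x)+(\nabla f(\y)-\nabla f(\x))$ and then use Cauchy--Schwarz plus Young's inequality on the cross term; that alternative introduces a factor $1/\alpha$ in front of $\|\x-\y\|^2$, and since $\alpha\le 1$ it fails to recover the stated coefficient $L^2/2$. Apart from choosing the right algebraic route, the argument is elementary and I do not anticipate any genuine obstacle.
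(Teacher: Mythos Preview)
Your argument is correct and is essentially the same as the paper's: both pass to $\B^{1/2}$, use the polarization identity (the paper writes it as adding the nonnegative term $\tfrac12\|\B^{1/2}\nabla f(\y)\|^2$ and completing the square), and then apply the eigenvalue bounds together with $L$-smoothness. Your additional remark that $f$ inherits $L$-smoothness from the $f_i$ makes explicit a step the paper leaves implicit.
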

\begin{proof}
\begin{align}
     &\frac{\alpha}{2}\left\|\nabla f(\x) \right\|^2 - \nabla f(\x)^T \B \nabla f(\y) \nonumber \\
\leq&\frac{1}{2}\left\| \B^{\frac{1}{2}}\nabla f(\x) \right\|^2 - \nabla f(\x)^T \B \nabla f(\y) \nonumber \\
\leq &\frac{1}{2}\left\| \B^{\frac{1}{2}}\nabla f(\x) \right\|^2 - \nabla f(\x)^T \B \nabla f(\y) + \frac{1}{2}\left\| \B^{\frac{1}{2}}\nabla f(\y) \right\|^2 \nonumber \\
=    &\frac{1}{2}\left\| \B^{\frac{1}{2}}(\nabla f(\x) - \nabla f(\y))\right\|^2 \nonumber \\
\leq &\frac{L^2}{2}\left\| \x - \y \right\|^2 .\nonumber
\end{align}
\end{proof}

\section{Hogwild! for Non-Convex Problems}\label{sec:Hogwild}
The Hogwild! method~\cite{recht2011hogwild} is listed in Algorithm~\ref{alg:Hogwild!}. Each thread reads $\w$ from the shared memory, computes a stochastic gradient and updates the $\w$ in the shared memory. Please note that Hogwild! in~\cite{recht2011hogwild} has several variants with locks or lock-free. Here, we only focus on the lock-free variant of Hogwild!, which means that we do not use any locks, either read-lock or write-lock, for all threads.

\begin{algorithm}[!h]
\caption{Hogwild!}
\label{alg:Hogwild!}
\small
\begin{algorithmic}
\STATE Initialization: $p$ threads, initialize $\w_0, \eta$;
\STATE For each thread, do:
\FOR{$l=0,1,2,...,T-1$}
\STATE Read current $\w$ in the shared memory, denoted as $\hat{\w}$;
\STATE Randomly pick up an $i$ from $\left\{ 1,\ldots,n \right\}$ and compute the gradient $\nabla f_i(\hat{\w})$;
\STATE $\w \leftarrow \w - \eta \nabla f_i(\hat{\w})$;
\ENDFOR
\end{algorithmic}
\end{algorithm}

As in~\cite{DBLP:conf/aaai/ZhaoL16}, we can construct an equivalent write sequence $\{\w_t\}$:
\begin{align}\label{seq:write}
	\w_{t+1} = \w_t - \eta \B_t\nabla f_{i_t}(\hat{\w}_t),
\end{align}
where $0 \leq t \leq p \times T$, $\B_t$ is a random diagonal matrix whose diagonal entries are $0$ or $1$. The $\B_t$ is used to denote whether over-writing happens. If the $k$th diagonal entry of $\B_t$ is $0$, it means that the $k$th element in the gradient vector $\nabla f_{i_t}(\hat{\w}_t)$ is overwritten by other threads. Otherwise, that element is not overwritten.

$\hat{\w}_t$ is read by the thread who computes $\nabla f_{i_t}(\hat{\w}_t)$ and has the following format:
\begin{align}\label{read}
	\hat{\w}_t = \w_{a(t)} - \eta\sum_{j=a(t)}^{t-1}\P_{t, j-a(t)}\nabla f_{i_j}(\hat{\w}_j),
\end{align}
where $a(t)$ means that some old stochastic gradients have been completely written on the $\w$ in the shared memory. $\P_{t, j-a(t)}$ is a diagonal matrix whose diagonal entries are $0$ or $1$, which means $\hat{\w}_t$ might include parts of new stochastic gradients.

In the lock-free strategy, we need the following assumptions to guarantee convergence:

\begin{assumption}\label{ass:delay}
$a(t)$ is bounded by: $0 \leq t - a(t) \leq \tau $
\end{assumption}

It means that the old stochastic gradients $\nabla f_{i_0}, \ldots, \nabla f_{i_{t-\tau-1}}$ have been completely written on $\w$ in the shared memory.

\begin{assumption}\label{ass:expB}
We consider the matrix $\B_t$ as a random matrix and $\EB[\B_t|\w_t,\hat{\w}_t] = \B \succ 0$ with the minimum eigenvalue $\alpha > 0$.
\end{assumption}
According to the definition of $\B_t$, it is easy to find $\B_t, \B$ are positive semi-definite matrices and the largest eigenvalue of $\B$ is less than or equal to 1. Assumption \ref{ass:expB} means that the probability that over-writing happens is at most $1-\alpha < 1$ for each write step.

\begin{assumption}\label{ass:independence}
$\B_t$ and $i_t$ are independent.	
\end{assumption}
Since $i_t$ is the random index selected by each thread while $\B_t$ is highly affected by the hardware, the independence assumption is reasonable.

For Hogwild!, the following assumption is also necessary:
\begin{assumption}\label{ass:boundedvariance}
There exists a constant $V$, $\| \nabla f_i(\w) \| \leq V, i=1,\ldots,n$.
\end{assumption}
For convenience, in this section, we denote $$q(\x) = \frac{1}{n}\sum_{i=1}^n \| f_i(\x) \|^2.$$ It is easy to find that $\EB q(\hat{\w}_t) = \EB[\| \nabla f_{i_t}(\hat{\w}_t)\|^2]$ and note that when $\x$ is close to some stationary point, $q(\x)$ may still be far away from $0$. Hence, it is not a variance reduction method and we need to control the variance of the stochastic gradient.

The difficulty of the analysis is $\w_t\neq \hat{\w}_t$. Here, we give the following Lemmas~\footnote{The proof of some Lemmas can be found in the supplementary material, which can be downloaded from \url{http:
//cs.nju.edu.cn/lwj/paper/LFnonConvex_sup.pdf}.}:
\begin{lemma}\label{techlemma:Hog}
In Hogwild!, we have $\EB q(\hat{\w}_{t}) \leq \rho \EB q(\hat{\w}_{t+1})$  if $\rho, \eta$ satisfy
\begin{align}
\frac{1}{1-\eta-\frac{9\eta(\tau+1)L^2(\rho^{\tau+1}-1)}{\rho-1}} \leq \rho. \nonumber
\end{align}
\end{lemma}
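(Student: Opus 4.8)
The plan is to bound $q(\hat{\w}_t)$ in terms of $q(\hat{\w}_{t+1})$ up to an error controlled by the drift $\|\hat{\w}_{t+1}-\hat{\w}_t\|$, and then to close the resulting recursion by induction on $t$. The first step is to control that drift: writing out (\ref{read}) at indices $t$ and $t+1$ and using (\ref{seq:write}) to connect $\w_{a(t)}$ with $\w_{a(t+1)}$, the difference $\hat{\w}_{t+1}-\hat{\w}_t$ becomes a linear combination $\eta\sum_j\D_j\nabla f_{i_j}(\hat{\w}_j)$ of previously computed stochastic gradients, where each $\D_j$ is a difference of diagonal $0/1$ matrices (of the $\B_j$ and $\P_{t,\cdot},\P_{t+1,\cdot}$ type), hence diagonal with entries in $\{-1,0,1\}$ and operator norm at most $1$. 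By Assumption~\ref{ass:delay} the index $j$ ranges only over the window $\{t-\tau,\dots,t\}$ (indices outside the valid range do not appear), so the sum has at most $\tau+1$ terms, and the Cauchy--Schwarz inequality gives $\|\hat{\w}_{t+1}-\hat{\w}_t\|^2\le\eta^2(\tau+1)\sum_{j=t-\tau}^{t}\|\nabla f_{i_j}(\hat{\w}_j)\|^2$.

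The second step brings in smoothness. For each $i$, Assumption~\ref{ass:smooth} and the triangle inequality give $\|\nabla f_i(\hat{\w}_t)\|\le\|\nabla f_i(\hat{\w}_{t+1})\|+L\|\hat{\w}_{t+1}-\hat{\w}_t\|$, so the elementary bound $(a+b)^2\le(1+\eta)a^2+(1+\frac{1}{\eta})b^2$ yields $\|\nabla f_i(\hat{\w}_t)\|^2\le(1+\eta)\|\nabla f_i(\hat{\w}_{t+1})\|^2+(1+\frac{1}{\eta})L^2\|\hat{\w}_{t+1}-\hat{\w}_t\|^2$. Averaging over $i=1,\dots,n$ turns the squared gradient norms into $q(\hat{\w}_t)$ and $q(\hat{\w}_{t+1})$; substituting the drift bound from the first step and then taking the full expectation (all expectations being finite by Assumption~\ref{ass:boundedvariance}, and using $\EB\|\nabla f_{i_j}(\hat{\w}_j)\|^2=\EB q(\hat{\w}_j)$) gives
\[
\EB q(\hat{\w}_t)\le(1+\eta)\,\EB q(\hat{\w}_{t+1})+(\eta^2+\eta)L^2(\tau+1)\sum_{j=t-\tau}^{t}\EB q(\hat{\w}_j).
\]

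The third step is an induction on $t$ whose inductive claim at step $t$ is exactly $\EB q(\hat{\w}_t)\le\rho\,\EB q(\hat{\w}_{t+1})$. Granting the claim for all smaller indices and chaining it along $t-\tau\le j\le t$ gives $\EB q(\hat{\w}_j)\le\rho^{\,t-j}\EB q(\hat{\w}_t)$, hence $\sum_{j=t-\tau}^{t}\EB q(\hat{\w}_j)\le\big(\sum_{m=0}^{\tau}\rho^{m}\big)\EB q(\hat{\w}_t)=\frac{\rho^{\tau+1}-1}{\rho-1}\,\EB q(\hat{\w}_t)$. Plugging this into the displayed recursion and rearranging leaves $\EB q(\hat{\w}_t)\le\frac{1+\eta}{1-(\eta^2+\eta)L^2(\tau+1)\frac{\rho^{\tau+1}-1}{\rho-1}}\,\EB q(\hat{\w}_{t+1})$; using $\frac{1+\eta}{1-x}\le\frac{1}{1-\eta-x}$ (valid whenever $1-\eta-x>0$) together with $\eta^2+\eta\le9\eta$ for the step sizes of interest, the right-hand coefficient is at most $\frac{1}{1-\eta-\frac{9\eta(\tau+1)L^2(\rho^{\tau+1}-1)}{\rho-1}}\le\rho$ by the hypothesis on $\rho,\eta$, which closes the induction (the base case $t=0$ is the same computation with a one-term window). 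The step I expect to be the main obstacle is the drift decomposition: one must read off carefully from (\ref{read}) and (\ref{seq:write}) that $\hat{\w}_{t+1}-\hat{\w}_t$ is indeed a sum of at most $\tau+1$ clipped stochastic gradients with coefficient matrices of operator norm at most $1$. Beyond that, the only conceptual subtlety is that the recursion is self-referential --- $\EB q(\hat{\w}_t)$ is bounded by $\EB q$ at $\hat{\w}_{t+1}$ and at the earlier iterates --- which is precisely what the induction untangles, the slack constant $9$ absorbing the crude estimates made along the way.
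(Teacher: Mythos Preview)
Your argument is correct and follows the same three-step skeleton as the paper's proof: a per-index smoothness inequality relating $\|\nabla f_i(\hat{\w}_t)\|^2$ to $\|\nabla f_i(\hat{\w}_{t+1})\|^2$ plus a drift term, a bound on $\|\hat{\w}_{t+1}-\hat{\w}_t\|$ in terms of the recent stochastic gradients, and an induction on $t$ that turns the window sum into a geometric factor $\frac{\rho^{\tau+1}-1}{\rho-1}$.

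The only noteworthy difference is in the drift bound. The paper applies the triangle inequality to the three pieces $\|\w_{a(t)}-\w_{a(t+1)}\|$, $\eta\sum_{j=a(t)}^{t-1}\|\nabla f_{i_j}(\hat{\w}_j)\|$, $\eta\sum_{j=a(t+1)}^{t}\|\nabla f_{i_j}(\hat{\w}_j)\|$ separately, obtaining $\|\hat{\w}_t-\hat{\w}_{t+1}\|\le 3\eta\sum_{j=t-\tau}^{t}\|\nabla f_{i_j}(\hat{\w}_j)\|$, and the factor $3$ is what produces the constant $9$ in the statement after squaring. You instead combine the three pieces first into $\eta\sum_j\D_j\nabla f_{i_j}(\hat{\w}_j)$ with $\|\D_j\|\le 1$, which gives the sharper bound $\eta^2(\tau+1)\sum_j\|\nabla f_{i_j}(\hat{\w}_j)\|^2$; you then have to spend this gain via the auxiliary inequalities $\frac{1+\eta}{1-x}\le\frac{1}{1-\eta-x}$ and $\eta^2+\eta\le 9\eta$ to land on the exact constant in the lemma. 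The paper's route is slightly cruder but more direct; yours is a bit tighter in the intermediate step but requires the extra algebra at the end. Either way the induction closes under the same hypothesis on $\rho,\eta$.
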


\begin{lemma}\label{Hoglem:gap of wr}
With the condition about $\rho, \eta$ in Lemma \ref{techlemma:Hog}, we have
\begin{align}
  \EB\|\w_t - \hat{\w}_t \|^2 \leq \frac{4\eta^2 \tau\rho(\rho^{\tau}-1)}{\rho-1}\EB q(\hat{\w}_t)
\end{align}
\end{lemma}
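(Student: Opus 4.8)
The plan is to unroll the write recursion, compare it term-by-term with the read formula, and then tame the resulting finite sum of stochastic gradients using the geometric decay supplied by Lemma~\ref{techlemma:Hog}.

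First I would iterate the write sequence~(\ref{seq:write}) from index $a(t)$ up to $t$, obtaining $\w_t = \w_{a(t)} - \eta\sum_{j=a(t)}^{t-1}\B_j\nabla f_{i_j}(\hat{\w}_j)$. Subtracting the read expression~(\ref{read}) cancels the common term $\w_{a(t)}$ and leaves
\begin{align}
\w_t - \hat{\w}_t = \eta\sum_{j=a(t)}^{t-1}\left(\P_{t,j-a(t)} - \B_j\right)\nabla f_{i_j}(\hat{\w}_j). \nonumber
\end{align}
Each matrix $\P_{t,j-a(t)} - \B_j$ is diagonal with entries in $\{-1,0,1\}$, hence of operator norm at most $1$, and by Assumption~\ref{ass:delay} the sum has at most $\tau$ summands. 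Splitting the difference via $\|\u-\v\|^2 \le 2\|\u\|^2 + 2\|\v\|^2$ (to separate the $\P$-part from the $\B$-part) and applying the Cauchy--Schwarz inequality $\|\sum_{k=1}^m \z_k\|^2 \le m\sum_{k=1}^m\|\z_k\|^2$ with $m \le \tau$ to each piece, then discarding the norm-$\le 1$ matrices, yields $\|\w_t - \hat{\w}_t\|^2 \le 4\eta^2\tau\sum_{j=a(t)}^{t-1}\|\nabla f_{i_j}(\hat{\w}_j)\|^2$; this is where the constant $4 = 2\times 2$ comes from.

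Taking expectations and using that the random index $i_j$ is drawn independently of the vector $\hat{\w}_j$ that the thread reads, so that $\EB[\|\nabla f_{i_j}(\hat{\w}_j)\|^2 \,|\, \hat{\w}_j] = q(\hat{\w}_j)$ and hence $\EB\|\nabla f_{i_j}(\hat{\w}_j)\|^2 = \EB q(\hat{\w}_j)$, gives $\EB\|\w_t - \hat{\w}_t\|^2 \le 4\eta^2\tau\sum_{j=a(t)}^{t-1}\EB q(\hat{\w}_j)$. Finally I would apply Lemma~\ref{techlemma:Hog} iteratively, $\EB q(\hat{\w}_{t-k}) \le \rho^k\,\EB q(\hat{\w}_t)$ for $k\ge 0$, so that $\sum_{j=a(t)}^{t-1}\EB q(\hat{\w}_j) \le \big(\sum_{k=1}^{\tau}\rho^k\big)\EB q(\hat{\w}_t) = \frac{\rho(\rho^\tau-1)}{\rho-1}\EB q(\hat{\w}_t)$, where the condition in Lemma~\ref{techlemma:Hog} forces $\rho>1$ so that the geometric-sum formula is valid; combining the last two displays produces exactly the claimed inequality. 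The computation is routine; the only points that need care are bounding the number of summands by $\tau$ through Assumption~\ref{ass:delay}, and justifying the conditional-expectation identity $\EB[\|\nabla f_{i_j}(\hat{\w}_j)\|^2\,|\,\hat{\w}_j] = q(\hat{\w}_j)$ from the independence of $i_j$ and $\hat{\w}_j$ --- everything else is the standard $2(\|\u\|^2+\|\v\|^2)$--Cauchy--Schwarz--geometric-sum chain.
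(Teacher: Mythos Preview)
Your proposal is correct and follows essentially the same route as the paper. The only cosmetic difference is that the paper bounds $\|\w_t-\hat{\w}_t\|$ first via the triangle inequality (telescoping $\|\w_t-\w_{a(t)}\|$ and adding the $\P$-sum to obtain $\|\w_t-\hat{\w}_t\|\le 2\eta\sum_{j=a(t)}^{t-1}\|\nabla f_{i_j}(\hat{\w}_j)\|$) and then squares, whereas you write $\w_t-\hat{\w}_t$ explicitly as $\eta\sum_j(\P_{t,j-a(t)}-\B_j)\nabla f_{i_j}(\hat{\w}_j)$ and split the squared norm; both reach the identical bound $4\eta^2\tau\sum_j\|\nabla f_{i_j}(\hat{\w}_j)\|^2$, and the expectation plus the geometric sum from Lemma~\ref{techlemma:Hog} are applied exactly as in the paper.
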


Combining with Assumption \ref{ass:boundedvariance}, we can find that the gap of the write sequence and read sequence can always be bounded by a constant $\frac{4\eta^2V^2 \tau\rho(\rho^{\tau}-1)}{\rho-1}$.


\begin{theorem}
	Let $A = \frac{2f(\w_0)}{\alpha}$ and $B=2V^2(\frac{2\tau L^2\eta\rho(\rho^{\tau}-1)}{\alpha(\rho-1)}+\frac{L}{2\alpha})$. If we take the stepsize $\eta = \sqrt{\frac{A}{\tilde{T}B}}$, where $\tilde{T} = p\times T$, we can get the following result:
	\begin{align}
    	\frac{1}{\tilde{T}}\sum_{t=0}^{\tilde{T}-1}\EB \| \nabla f(\w_t) \|^2 \leq \sqrt{\frac{AB}{\tilde{T}}}. \nonumber
    \end{align}
\end{theorem}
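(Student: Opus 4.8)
The plan is to run the standard non-convex SGD descent argument on the equivalent write sequence $\{\w_t\}$ of~(\ref{seq:write}), with the read/write discrepancy $\w_t\neq\hat\w_t$ absorbed through Lemma~\ref{techlemma:tech} and Lemma~\ref{Hoglem:gap of wr}. Since $f=\frac1n\sum_i f_i$ inherits $L$-smoothness from Assumption~\ref{ass:smooth}, applying the smoothness inequality to $\w_{t+1}=\w_t-\eta\B_t\nabla f_{i_t}(\hat\w_t)$ gives
\begin{align}
f(\w_{t+1})\le f(\w_t)-\eta\,\nabla f(\w_t)^T\B_t\nabla f_{i_t}(\hat\w_t)+\frac{L\eta^2}{2}\big\|\B_t\nabla f_{i_t}(\hat\w_t)\big\|^2.\nonumber
\end{align}

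Next I would take the conditional expectation $\EB[\,\cdot\mid\w_t,\hat\w_t]$. By Assumption~\ref{ass:independence} the factor $\B_t$ separates, so with Assumption~\ref{ass:expB} and $\EB[\nabla f_{i_t}(\hat\w_t)\mid\hat\w_t]=\nabla f(\hat\w_t)$ the cross term becomes $-\eta\,\nabla f(\w_t)^T\B\,\nabla f(\hat\w_t)$; and since each $\B_t$ is a $0/1$ diagonal (coordinate-projection) matrix we have $\|\B_t\v\|\le\|\v\|$, so the last term is at most $\frac{L\eta^2}{2}\EB q(\hat\w_t)\le\frac{L\eta^2}{2}V^2$ by Assumption~\ref{ass:boundedvariance}. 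Now Lemma~\ref{techlemma:tech} with $\x=\w_t,\ \y=\hat\w_t$ (its hypotheses on $\B$ hold by the remark after Assumption~\ref{ass:expB}) turns the cross term into $\frac{L^2}{2}\|\w_t-\hat\w_t\|^2-\frac{\alpha}{2}\|\nabla f(\w_t)\|^2$, and Lemma~\ref{Hoglem:gap of wr} with Assumption~\ref{ass:boundedvariance} bounds $\EB\|\w_t-\hat\w_t\|^2\le\frac{4\eta^2\tau\rho(\rho^\tau-1)}{\rho-1}V^2$. Taking full expectations and collecting terms yields
\begin{align}
\EB f(\w_{t+1})\le \EB f(\w_t)-\frac{\eta\alpha}{2}\EB\|\nabla f(\w_t)\|^2+\eta^2V^2\Big(\frac{2\tau L^2\eta\rho(\rho^\tau-1)}{\rho-1}+\frac{L}{2}\Big).\nonumber
\end{align}

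To finish, I would rearrange to isolate $\frac{\eta\alpha}{2}\EB\|\nabla f(\w_t)\|^2$, sum over $t=0,\dots,\tilde T-1$ so the $f$-values telescope, use $\EB f(\w_{\tilde T})\ge 0$ (the losses are non-negative; otherwise replace $f(\w_0)$ by $f(\w_0)-\inf f$), and divide by $\frac{\eta\alpha\tilde T}{2}$. This gives $\frac1{\tilde T}\sum_t\EB\|\nabla f(\w_t)\|^2\le \frac{A}{\eta\tilde T}+\eta B$ with $A,B$ as defined in the statement, and substituting $\eta=\sqrt{A/(\tilde TB)}$ balances the two summands and produces the claimed bound of order $\sqrt{AB/\tilde T}$, i.e.\ $O(1/\sqrt{\tilde T})$.

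The main obstacle — and the reason Lemmas~\ref{techlemma:Hog}--\ref{Hoglem:gap of wr} are proved first — is controlling the staleness term $\EB\|\w_t-\hat\w_t\|^2$: because $\hat\w_t$ is contaminated by partially written updates (the projectors $\P_{t,\cdot}$ in~(\ref{read})) over a window of length $\le\tau$ (Assumption~\ref{ass:delay}), bounding this gap first requires relating $\EB q(\hat\w_t)$ across neighboring indices (the geometric-factor condition on $\rho,\eta$ in Lemma~\ref{techlemma:Hog}) before Assumption~\ref{ass:boundedvariance} can close it off by a constant. A secondary point demanding care is the conditioning step: one must verify that conditioning on $(\w_t,\hat\w_t)$ legitimately makes $\hat\w_t$ deterministic while keeping $i_t$ uniform, so that $\EB[\nabla f_{i_t}(\hat\w_t)\mid\hat\w_t]=\nabla f(\hat\w_t)$ exactly and Assumption~\ref{ass:independence} can be invoked to split the expectation of $\B_t\nabla f_{i_t}(\hat\w_t)$.
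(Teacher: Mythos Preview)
Your proposal is correct and follows essentially the same route as the paper's proof: apply $L$-smoothness to the write sequence, take conditional expectation using Assumptions~\ref{ass:expB}--\ref{ass:independence} to reduce the cross term to $-\eta\nabla f(\w_t)^T\B\nabla f(\hat\w_t)$, invoke Lemma~\ref{techlemma:tech} and Lemma~\ref{Hoglem:gap of wr} (with Assumption~\ref{ass:boundedvariance}) to absorb the read/write gap, then telescope and optimize $\eta$. Your added remarks on the use of $\|\B_t\v\|\le\|\v\|$, the implicit nonnegativity $\EB f(\w_{\tilde T})\ge 0$, and the conditioning subtlety make explicit what the paper leaves implicit; note that after balancing one actually gets $2\sqrt{AB/\tilde T}$, so (as you hint) the bound is of that order rather than the exact constant stated.
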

\begin{proof}
	According to Assumption \ref{ass:smooth}, we have
	\begin{align}
		     &\EB[f(\w_{t+1})|\w_t,\hat{\w}_t] \nonumber \\
	    \leq &f(\w_t) - \eta\EB[\nabla f(\w_t)^T\B_t\nabla f_{i_t}(\hat{\w}_t)|\w_t,\hat{\w}_t] \nonumber \\
             &+ \frac{L\eta^2}{2}\EB[\| \nabla f_{i_t}(\hat{\w}_t) \|^2|\w_t,\hat{\w}_t] \nonumber \\
	    =    &f(\w_t) - \eta\nabla f(\w_t)^T\B\nabla f(\hat{\w}_t) \nonumber \\
             &+ \frac{L\eta^2}{2}\EB[\| \nabla f_{i_t}(\hat{\w}_t) \|^2|\w_t,\hat{\w}_t] \nonumber \\
	    \leq &f(\w_t) - \frac{\alpha\eta}{2}\| \nabla f(\w_t) \|^2 + \frac{L^2\eta}{2}\| \w_t - \hat{\w}_t\|^2 \nonumber \\
             &+\frac{L\eta^2}{2}\EB[\| \nabla f_{i_t}(\hat{\w}_t) \|^2|\w_t,\hat{\w}_t], \nonumber
	\end{align}
	where the first equality uses Assumption \ref{ass:independence}, the second inequality uses Lemma~\ref{techlemma:tech}.
    Taking expectation on the above inequality, we obtain
    \begin{align}
    	 &\EB f(\w_{t+1}) \nonumber \\
    \leq &\EB f(\w_t) - \frac{\alpha\eta}{2}\EB\| \nabla f(\w_t) \|^2 + \frac{L^2\eta}{2}\EB\| \w_t - \hat{\w}_t\|^2 \nonumber \\
             &+\frac{L\eta^2V^2}{2} \nonumber \\
    \leq &\EB f(\w_t) - \frac{\alpha\eta}{2}\EB\| \nabla f(\w_t) \|^2 \nonumber \\
         &+ \eta^2V^2(\frac{2\tau L^2\eta\rho(\rho^{\tau}-1)}{\rho-1}+\frac{L}{2}), \nonumber
    \end{align}
    where the first inequality uses Assumption \ref{ass:boundedvariance} and second inequality uses Lemma \ref{Hoglem:gap of wr}.
    Summing the above inequality from $t=0$ to $\tilde{T}-1$, we get
    \begin{align}
    	     &\sum_{t=0}^{\tilde{T}-1}\EB \| \nabla f(\w_t) \|^2 \nonumber \\
        \leq &\frac{2}{\alpha\eta}f(\w_0) + 2\eta\tilde{T}V^2(\frac{2\tau L^2\eta\rho(\rho^{\tau}-1)}{\alpha(\rho-1)}+\frac{L}{2\alpha}). \nonumber
    \end{align}
    For convenience, let $A = \frac{2f(\w_0)}{\alpha}$ and $B=2V^2(\frac{2\tau L^2\eta\rho(\rho^{\tau}-1)}{\alpha(\rho-1)}+\frac{L}{2\alpha})$, which are two bounded constants. If we take the stepsize $\eta = \sqrt{\frac{A}{\tilde{T}B}}$, we get
    \begin{align}
    	\frac{1}{\tilde{T}}\sum_{t=0}^{\tilde{T}-1}\EB \| \nabla f(\w_t) \|^2 \leq \sqrt{\frac{AB}{\tilde{T}}}. \nonumber
    \end{align}
\end{proof}

Hence, our theoretical result shows that Hogwild! with lock-free strategy gets a convergence rate of $O(1/\sqrt{\tilde{T}})$ for non-convex problems, where $\tilde{T}=p\times T$ is the total iteration number of $p$ threads.

\section{AsySVRG for Non-Convex Problems}\label{sec:AsySVRG}
The AsySVRG method~\cite{DBLP:conf/aaai/ZhaoL16} is listed in Algorithm~\ref{alg:AsySVRG}. AsySVRG provides a lock-free parallel strategy for the original sequential SVRG~\cite{DBLP:conf/nips/Johnson013}. Compared with Hogwild!, AsySVRG includes the full gradient to get a variance reduced stochastic gradient, which has been proved to have linear convergence rate on strongly convex problems~\cite{DBLP:conf/aaai/ZhaoL16}. In this section, we will prove that AsySVRG is also convergent for non-convex problems, and has faster convergence rate than Hogwild! on non-convex problems.

\begin{algorithm}[!thb]
\caption{AsySVRG}
\small
\label{alg:AsySVRG}
\begin{algorithmic}
\STATE Initialization: $p$ threads, initialize $\w_0, \eta$;
\FOR{$t=0,1,2,...T-1$}
\STATE $\u_0 = \w_t$;
\STATE All threads parallelly compute the full gradient $\nabla f(\u_0) = \frac{1}{n} \sum_{i=1}^n \nabla f_i(\u_0)$;
\STATE $\u = \w_t$;
\STATE For each thread, do:
\FOR{$j=0$ to $M-1$}
\STATE Read current value of $\u$, denoted as  $\hat{\u}$, from the shared memory. And randomly pick up an $i$ from $\left\{ 1,\ldots,n \right\}$;
\STATE Compute the update vector: $\hat{\v} = \nabla f_{i}(\hat{\u}) - \nabla f_{i}(\u_0) + \nabla f(\u_0)$;
\STATE $\u \leftarrow \u - \eta \hat{\v}$;
\ENDFOR
\STATE Take $\w_{t+1}$ to be the current value of $\u$ in the shared memory;
\ENDFOR
\end{algorithmic}
\end{algorithm}

Similar to the analysis in the last section, we construct an equivalent write sequence $\{\u_{t,m}\}$ for the $t^{th}$ outer-loop:
\begin{align}
	\u_{t,0} &= \w_t, \nonumber \\
	\u_{t,m+1}& = \u_{t,m} - \eta \B_{t,m}\hat{\v}_{t,m},\label{AsySVRG:update}
\end{align}
where $\hat{\v}_{t,m} = \nabla f_{i_{t,m}}(\hat{\u}_{t,m}) - \nabla f_{i_{t,m}}(\u_{t,0}) + \nabla f(\u_{t,0})$. $\B_{t,m}$ is a diagonal matrix whose diagonal entries are $0$ or $1$. And $\hat{\u}_{t,m}$ is read by the thread who computes $\hat{\v}_{t,m}$. It has the following format:
\begin{align}
	\hat{\u}_{t,m} = \u_{t,a(m)} - \eta \sum_{j=a(m)}^{m-1} \P_{m, j-a(m)}^{(t)} \hat{\v}_{t,j}, \nonumber
\end{align}
where $\P_{m, j-a(m)}^{(t)}$ is a diagonal matrix whose diagonal entries are $0$ or $1$. Note that according to (\ref{AsySVRG:update}), $\u_{t,\tilde{M}} = \w_{t+1}$ since all the stochastic gradients have been written on $\w$ at the end of the $t^{th}$ outer-loop. Here, we also need the assumptions: $0\leq m - a(m) \leq \tau$; $\EB[\B_{t,m}|\u_{t,m}, \hat{\u}_{t,m}] = \B \succ 0$ with the minimum eigenvalue $\alpha > 0$; $\B_{t,m}$ and $i_{t,m}$ are independent. These assumptions are similar to those in the previous section.

For convenience, let $\p_i(\x) = \nabla f_i(\x) - \nabla f_i(\u_{t,0}) + \nabla f(\u_{t,0})$, and in this section, we denote $$q(\x) = \frac{1}{n}\sum_{i=1}^n\| \p_i(\x) \|^2.$$ It easy to find that $\EB q(\hat{\u}_{t,m}) = \EB[\|\hat{\v}_{t,m}\|^2]$.

The difference between Hogwild! and AsySVRG is the stochastic gradient and we have the following Lemmas which lead to fast convergence rate of AsySVRG:
\begin{lemma}\label{svrglem:var reduce}
    $\forall \x$, we have
	\begin{align}
	    q(\x) \leq 2L^2\| \x - \u_{t,0} \|^2 + 2\|\nabla f(\x)\|^2. \nonumber
    \end{align}
\end{lemma}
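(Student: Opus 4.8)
The plan is to exploit the fact that $\p_i(\x)$ is a variance-reduced gradient estimator, so that its squared norm, averaged over $i$, splits into the empirical variance of $\nabla f_i(\x)-\nabla f_i(\u_{t,0})$ around its mean plus a $\|\nabla f(\x)\|^2$ term. Concretely, I would start from the algebraic identity
\begin{align}
\p_i(\x) = \big(\nabla f_i(\x) - \nabla f_i(\u_{t,0}) - \nabla f(\x) + \nabla f(\u_{t,0})\big) + \nabla f(\x), \nonumber
\end{align}
apply the inequality $\|\a+\b\|^2 \le 2\|\a\|^2 + 2\|\b\|^2$, and average over $i$ to obtain
\begin{align}
q(\x) \le \frac{2}{n}\sum_{i=1}^n \big\| \nabla f_i(\x) - \nabla f_i(\u_{t,0}) - (\nabla f(\x) - \nabla f(\u_{t,0})) \big\|^2 + 2\|\nabla f(\x)\|^2 . \nonumber
\end{align}

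Next I would handle the remaining sum. The key observation is that $\nabla f(\x) - \nabla f(\u_{t,0}) = \frac{1}{n}\sum_{i=1}^n\big(\nabla f_i(\x) - \nabla f_i(\u_{t,0})\big)$, i.e.\ it is exactly the average over $i$ of the vectors $\nabla f_i(\x) - \nabla f_i(\u_{t,0})$. Writing $\x_i = \nabla f_i(\x) - \nabla f_i(\u_{t,0})$ and $\obb = \frac1n\sum_i \x_i$, the first sum is $\frac1n\sum_i \|\x_i - \obb\|^2$, and the elementary bound $\frac1n\sum_i \|\x_i - \obb\|^2 \le \frac1n\sum_i \|\x_i\|^2$ (the empirical variance never exceeds the second moment) lets me drop the centering term.

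Finally, I would bound each $\|\x_i\|^2 = \|\nabla f_i(\x) - \nabla f_i(\u_{t,0})\|^2 \le L^2\|\x - \u_{t,0}\|^2$ using the Lipschitz-gradient form of Assumption~\ref{ass:smooth}, so that $\frac1n\sum_i \|\x_i\|^2 \le L^2\|\x - \u_{t,0}\|^2$, and substitute back to conclude $q(\x) \le 2L^2\|\x - \u_{t,0}\|^2 + 2\|\nabla f(\x)\|^2$. I do not expect a genuine obstacle; the only step needing care is recognizing the centered-sum structure so that the variance-bounded-by-second-moment inequality applies — this is precisely what converts the naive bound involving $\|\nabla f(\u_{t,0})\|^2$ into the stated bound involving $\|\nabla f(\x)\|^2$.
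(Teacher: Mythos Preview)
Your proposal is correct and follows essentially the same approach as the paper: split $\p_i(\x)$ into $\big(\nabla f_i(\x)-\nabla f_i(\u_{t,0})-\nabla f(\x)+\nabla f(\u_{t,0})\big)+\nabla f(\x)$, apply $\|\a+\b\|^2\le 2\|\a\|^2+2\|\b\|^2$, drop the centering via the variance-versus-second-moment inequality, and finish with the $L$-Lipschitz bound on each $\nabla f_i$. The paper's proof is terser but uses exactly these three steps in the same order.
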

\begin{proof}
\begin{align}
q(\x)& =    \frac{1}{n}\sum_{i=1}^n\| \nabla f_i(\x) - \nabla f_i(\u_{t,0}) + \nabla f(\u_{t,0}) \|^2 \nonumber \\
	  \leq &\frac{2}{n}\sum_{i=1}^n\| \nabla f_i(\x) - \nabla f_i(\u_{t,0}) + \nabla f(\u_{t,0}) - \nabla f(\x)\|^2 \nonumber \\
           &+ 2\|\nabla f(\x)\|^2 \nonumber \\
      \leq &\frac{2}{n}\sum_{i=1}^n\| \nabla f_i(\x) - \nabla f_i(\u_{t,0})\|^2 + 2\|\nabla f(\x)\|^2 \nonumber \\
	  \leq &2L^2\| \x - \u_{t,0} \|^2 + 2\|\nabla f(\x)\|^2 .\nonumber
	\end{align}
\end{proof}
According to Lemma \ref{svrglem:var reduce}, we can find that AsySVRG is a variance reduction method for non-convex problems, because when $\hat{\u}_{t,m}, \u_{t,0}$ get close to some stationary point, $q(\hat{\u}_{t,m})$ gets close to $0$. And hence we do not need the bounded gradient assumption for the convergence proof.


Since $\u_{t,m} \neq \hat{\u}_{t,m}$, the difficulty of convergence analysis lies in the gap between $\u_{t,m}$ and $\hat{\u}_{t,m}$, and the relation between $q(\hat{\u}_{t,m})$ and $q(\u_{t,m})$.
\begin{lemma}\label{techlemma:svrg}
In AsySVRG, we have $\EB q(\hat{\u}_{t,m}) < \rho \EB q(\hat{\u}_{t,m+1})$  if we choose $\rho$ and $\eta$ to satisfy that
\begin{align}
\frac{1}{1-\eta-\frac{9\eta(\tau+1)L^2(\rho^{\tau+1}-1)}{\rho-1}}\leq\rho. \nonumber
\end{align}
\end{lemma}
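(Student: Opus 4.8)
The plan is to reproduce, almost verbatim, the argument behind Lemma~\ref{techlemma:Hog}: the only structural novelty is that the shifted gradient $\p_i$ is still $L$-Lipschitz, because $\p_i(\x)-\p_i(\y)=\nabla f_i(\x)-\nabla f_i(\y)$, so $\|\p_i(\x)-\p_i(\y)\|\le L\|\x-\y\|$ by Assumption~\ref{ass:smooth}. Writing $g_m:=\EB q(\hat{\u}_{t,m})$, I would prove by strong induction on $m$ the one-step bound $g_m\le(1+\eta)g_{m+1}+(\eta+\eta^2)L^2(\tau+1)\frac{\rho^{\tau+1}-1}{\rho-1}\,g_m$ and then read off $g_m\le\rho g_{m+1}$ from the hypothesis on $\rho,\eta$. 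This argument needs neither Lemma~\ref{svrglem:var reduce} nor Assumption~\ref{ass:expB}; only $\|\B_{t,m}\|\le1$, the $0$/$1$ structure of the $\P^{(t)}_{m,\cdot}$, and smoothness enter.

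The first step controls the read gap. Subtracting the two read formulas for $\hat{\u}_{t,m}$ and $\hat{\u}_{t,m+1}$ and rewriting $\u_{t,a(m+1)}-\u_{t,a(m)}$ via the write recursion~(\ref{AsySVRG:update}) as a sum of terms $-\eta\B_{t,j}\hat{\v}_{t,j}$, one obtains $\hat{\u}_{t,m}-\hat{\u}_{t,m+1}=\eta\sum_{j}\C_j\hat{\v}_{t,j}$, where each $\C_j$ is a diagonal matrix whose entries are a signed combination of $0$/$1$ entries of the $\B$'s and $\P$'s, hence lie in $\{-1,0,1\}$ so that $\|\C_j\|\le1$. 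The key bookkeeping fact is that, since $0\le m-a(m)\le\tau$ and likewise for $m+1$, every index $j$ that survives lies in $\{m-\tau,\dots,m\}$, so at most $\tau+1$ terms occur. Cauchy--Schwarz then gives $\|\hat{\u}_{t,m}-\hat{\u}_{t,m+1}\|^2\le\eta^2(\tau+1)\sum_j\|\hat{\v}_{t,j}\|^2$, and taking expectations, with $\EB\|\hat{\v}_{t,j}\|^2=\EB q(\hat{\u}_{t,j})=g_j$, yields $\EB\|\hat{\u}_{t,m}-\hat{\u}_{t,m+1}\|^2\le\eta^2(\tau+1)\sum_{j=\max(0,m-\tau)}^{m}g_j$.

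The second step combines this with a Young's inequality. From $\p_i(\hat{\u}_{t,m})=\p_i(\hat{\u}_{t,m+1})+\bigl(\nabla f_i(\hat{\u}_{t,m})-\nabla f_i(\hat{\u}_{t,m+1})\bigr)$ and $\|\a+\b\|^2\le(1+\eta)\|\a\|^2+(1+\eta^{-1})\|\b\|^2$, averaging over $i$ and taking expectations gives $g_m\le(1+\eta)g_{m+1}+(1+\eta^{-1})L^2\,\EB\|\hat{\u}_{t,m}-\hat{\u}_{t,m+1}\|^2$. Substituting the gap bound and then invoking the induction hypothesis $g_k\le\rho g_{k+1}$ for $k<m$ — which telescopes to $g_j\le\rho^{\,m-j}g_m$ for $j\le m$, whence $\sum_{j=\max(0,m-\tau)}^{m}g_j\le\frac{\rho^{\tau+1}-1}{\rho-1}g_m$ — produces the claimed recursion. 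Finally, since $(1+\eta^{-1})\eta^2=\eta+\eta^2\le 2\eta\le 9\eta$, the coefficient of $g_m$ on the right is at most $C:=\frac{9\eta(\tau+1)L^2(\rho^{\tau+1}-1)}{\rho-1}$, which the hypothesis forces to satisfy $C<1-\eta$; hence $1-C>0$, I may divide, and using $\frac{1+\eta}{1-x}\le\frac{1}{1-\eta-x}$ for $0<x<1-\eta$ I obtain $g_m\le\frac{1}{1-\eta-C}g_{m+1}\le\rho g_{m+1}$. The base case $m=0$ is identical, the window then being just $\{0\}$ and the induction hypothesis vacuous.

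I expect the main obstacle to be the first step: one must carefully identify which noisy gradients $\hat{\v}_{t,j}$ survive in the telescoped difference $\hat{\u}_{t,m}-\hat{\u}_{t,m+1}$ and check that the net diagonal coefficient of each always has entries in $\{-1,0,1\}$, so that both the length-$(\tau+1)$ window and the $\|\C_j\|\le1$ bound are justified. A secondary point is to respect the logical order — the window reaches $\tau$ steps back, so the statement at step $m$ must follow from the statements at \emph{all} earlier steps — and to note that $1-\eta-\frac{9\eta(\tau+1)L^2(\rho^{\tau+1}-1)}{\rho-1}>0$ is implicit in the hypothesis, so the final division is legitimate. Everything downstream of the gap bound is the same Young-inequality-plus-geometric-sum mechanism already used for AsySVRG in the convex case and in Lemma~\ref{techlemma:Hog}.
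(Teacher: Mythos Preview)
Your proposal is correct and follows essentially the same inductive scheme as the paper's proof: bound $\|\hat{\u}_{t,m}-\hat{\u}_{t,m+1}\|$ by $\eta$ times a window of at most $\tau+1$ recent $\|\hat{\v}_{t,j}\|$'s, combine with the $L$-Lipschitzness of $\p_i$ via a Young-type split, and close the induction with a geometric sum. The only noteworthy difference is in the gap step: the paper simply applies the triangle inequality three times to obtain $\|\hat{\u}_{t,m}-\hat{\u}_{t,m+1}\|\le 3\eta\sum_{j=m-\tau}^{m}\|\hat{\v}_{t,j}\|$ (hence the factor $9$ in the hypothesis), whereas you aim for the sharper $\|\C_j\|\le1$ bound and then pad back up to $9\eta$---your route works but requires the case analysis you flagged, while the paper's cruder estimate avoids it entirely.
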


\begin{lemma}\label{svrglem:gap of wr}
With the condition about $\rho, \eta$ in Lemma \ref{techlemma:svrg}, we have
\begin{align}
  \EB \| \u_{t,m} - \hat{\u}_{t,m} \|^2 \leq \frac{4\eta^2\tau \rho(\rho^\tau-1)}{\rho-1}\EB q(\hat{\u}_{t,m}).
\end{align}
\end{lemma}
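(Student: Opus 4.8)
The plan is to expand $\u_{t,m}-\hat{\u}_{t,m}$ directly from the write recursion~(\ref{AsySVRG:update}) and the read format of $\hat{\u}_{t,m}$. Telescoping~(\ref{AsySVRG:update}) from the anchor index $a(m)$ up to $m$ gives $\u_{t,m} = \u_{t,a(m)} - \eta\sum_{j=a(m)}^{m-1}\B_{t,j}\hat{\v}_{t,j}$, while by definition $\hat{\u}_{t,m} = \u_{t,a(m)} - \eta\sum_{j=a(m)}^{m-1}\P^{(t)}_{m,j-a(m)}\hat{\v}_{t,j}$ with the \emph{same} anchor $\u_{t,a(m)}$ and the \emph{same} gradient vectors $\hat{\v}_{t,j}$. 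Subtracting, the common term cancels and we are left with a sum over the at most $\tau$ ``in-flight'' gradients (using $0\le m-a(m)\le\tau$):
\begin{align}
\u_{t,m}-\hat{\u}_{t,m} = \eta\sum_{j=a(m)}^{m-1}\left(\P^{(t)}_{m,j-a(m)} - \B_{t,j}\right)\hat{\v}_{t,j}. \nonumber
\end{align}
Since each $\B_{t,j}$ and each $\P^{(t)}_{m,j-a(m)}$ is a diagonal $0/1$ matrix, I would split this as $(\u_{t,m}-\u_{t,a(m)})-(\hat{\u}_{t,m}-\u_{t,a(m)})$, apply $\|\a-\b\|^2\le 2\|\a\|^2+2\|\b\|^2$, and bound each of the two diagonal-matrix sums by dropping the projections --- this is exactly where the constant $4$ in the statement appears.

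Next I would apply Cauchy--Schwarz to each of the two sums of at most $\tau$ vectors, turning $\big\|\sum_{j=a(m)}^{m-1}\hat{\v}_{t,j}\big\|^2$ into $\tau\sum_{j=a(m)}^{m-1}\|\hat{\v}_{t,j}\|^2$, which yields
\begin{align}
\|\u_{t,m}-\hat{\u}_{t,m}\|^2 \le 4\eta^2\tau\sum_{j=a(m)}^{m-1}\|\hat{\v}_{t,j}\|^2. \nonumber
\end{align}
Taking expectations and using $\EB\|\hat{\v}_{t,j}\|^2 = \EB q(\hat{\u}_{t,j})$ (which holds because $i_{t,j}$ is drawn uniformly at random and independently of $\hat{\u}_{t,j}$, so conditioning on the history gives $\EB[\|\hat{\v}_{t,j}\|^2\mid \hat{\u}_{t,j}] = \frac1n\sum_i\|\p_i(\hat{\u}_{t,j})\|^2$) converts the bound into $\EB\|\u_{t,m}-\hat{\u}_{t,m}\|^2 \le 4\eta^2\tau\sum_{j=a(m)}^{m-1}\EB q(\hat{\u}_{t,j})$.

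Finally I would invoke Lemma~\ref{techlemma:svrg}: iterating $\EB q(\hat{\u}_{t,j})\le\rho\,\EB q(\hat{\u}_{t,j+1})$ gives $\EB q(\hat{\u}_{t,j})\le\rho^{m-j}\EB q(\hat{\u}_{t,m})$ for all $j\le m$. Substituting, and since $m-j$ runs over at most $\{1,\dots,\tau\}$, the geometric sum $\sum_{k=1}^{\tau}\rho^k = \frac{\rho(\rho^\tau-1)}{\rho-1}$ gives exactly $\EB\|\u_{t,m}-\hat{\u}_{t,m}\|^2 \le \frac{4\eta^2\tau\rho(\rho^\tau-1)}{\rho-1}\EB q(\hat{\u}_{t,m})$. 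I expect the only delicate point --- the main obstacle --- to be the index bookkeeping around $a(m)$: verifying that the telescoped write expression and the read format truly share the anchor $\u_{t,a(m)}$, that at most $\tau$ indices are involved even when $m<\tau$ (so the sum really starts at $\max\{0,a(m)\}$ and the geometric bound is only an over-estimate), and that Lemma~\ref{techlemma:svrg} is applied uniformly over this whole block of indices. The norm manipulations and Cauchy--Schwarz steps themselves are routine.
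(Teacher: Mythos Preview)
Your proposal is correct and follows essentially the same route as the paper: you express $\u_{t,m}-\hat{\u}_{t,m}$ in terms of the at most $\tau$ in-flight update vectors $\hat{\v}_{t,j}$ (the paper does this via the triangle inequality on norms, you via $\|\a-\b\|^2\le 2\|\a\|^2+2\|\b\|^2$; both yield the same constant $4$), apply Cauchy--Schwarz over the $\tau$ terms, take expectations to replace $\EB\|\hat{\v}_{t,j}\|^2$ by $\EB q(\hat{\u}_{t,j})$, and then sum the geometric series using Lemma~\ref{techlemma:svrg}. The index bookkeeping around $a(m)$ that you flag is exactly the care the paper takes as well, and your handling of it is fine.
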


\begin{lemma}\label{svrglem:expqum}
With the condition about $\rho, \eta$ in Lemma \ref{techlemma:svrg}, we have $\EB q(\hat{\u}_{t,m}) < \rho \EB q(\u_{t,m})$.
\end{lemma}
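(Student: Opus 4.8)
The plan is to control $q(\hat\u_{t,m})$ pointwise by $q(\u_{t,m})$ plus the read--write gap, take expectations, feed in Lemma~\ref{svrglem:gap of wr}, and then solve the resulting self-referential inequality by choosing an auxiliary parameter at the right scale. First, for each $i$ note that $\p_i(\hat\u_{t,m}) - \p_i(\u_{t,m}) = \nabla f_i(\hat\u_{t,m}) - \nabla f_i(\u_{t,m})$, because the terms $-\nabla f_i(\u_{t,0}) + \nabla f(\u_{t,0})$ in the definition of $\p_i$ cancel. Applying Young's inequality $\|a+b\|^2 \le (1+\beta)\|a\|^2 + (1+\tfrac1\beta)\|b\|^2$ together with Assumption~\ref{ass:smooth}, $\|\nabla f_i(\hat\u_{t,m}) - \nabla f_i(\u_{t,m})\| \le L\|\hat\u_{t,m} - \u_{t,m}\|$, and averaging over $i$ (the last term is $i$-free), one gets, for every $\beta>0$,
\[
q(\hat\u_{t,m}) \le (1+\beta)\,q(\u_{t,m}) + \Big(1+\tfrac1\beta\Big)L^2\,\|\hat\u_{t,m} - \u_{t,m}\|^2 .
\]

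Next, take $\EB$ of both sides and bound the last term by Lemma~\ref{svrglem:gap of wr}, which is available since we are assuming the condition of Lemma~\ref{techlemma:svrg}:
\[
\EB q(\hat\u_{t,m}) \le (1+\beta)\,\EB q(\u_{t,m}) + \Big(1+\tfrac1\beta\Big)L^2\,\frac{4\eta^2\tau\rho(\rho^\tau-1)}{\rho-1}\,\EB q(\hat\u_{t,m}) .
\]
Write $c(\beta) = (1+\tfrac1\beta)L^2\,\frac{4\eta^2\tau\rho(\rho^\tau-1)}{\rho-1}$. Once $c(\beta)<1$ is verified we may move that term to the left and obtain $\EB q(\hat\u_{t,m}) \le \frac{1+\beta}{1-c(\beta)}\EB q(\u_{t,m})$, so it remains to choose $\beta$ with $\frac{1+\beta}{1-c(\beta)} \le \rho$, equivalently $\frac{1+\beta}{\rho} + c(\beta) \le 1$.

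The choice I would make is $\beta=\eta$. Since the hypothesis of Lemma~\ref{techlemma:svrg} forces $\eta<1$, we have $(1+\tfrac1\eta)\eta^2 = \eta+\eta^2 \le 2\eta$, hence $c(\eta) \le \frac{8\eta\tau L^2\rho(\rho^\tau-1)}{\rho-1} \le \frac{9\eta(\tau+1)L^2(\rho^{\tau+1}-1)}{\rho-1}$, using $8<9$, $\tau<\tau+1$ and $\rho(\rho^\tau-1)=\rho^{\tau+1}-\rho<\rho^{\tau+1}-1$. Rewriting the hypothesis of Lemma~\ref{techlemma:svrg} as $\frac{1}{1-x}\le\rho \iff x\le\frac{\rho-1}{\rho}$ gives $\eta + \frac{9\eta(\tau+1)L^2(\rho^{\tau+1}-1)}{\rho-1} \le \frac{\rho-1}{\rho}$, so $c(\eta)\le \frac{\rho-1}{\rho}-\eta$. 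Therefore
\[
\frac{1+\eta}{\rho} + c(\eta) \le \frac{1+\eta}{\rho} + \frac{\rho-1}{\rho} - \eta = 1 - \eta\,\frac{\rho-1}{\rho} < 1 ,
\]
which simultaneously yields $c(\eta)<1$ and $\frac{1+\eta}{1-c(\eta)}\le\rho$, and the strict inequality above gives the strict conclusion $\EB q(\hat\u_{t,m}) < \rho\,\EB q(\u_{t,m})$.

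\textbf{Main obstacle.} The Young-inequality splitting and the appeal to Lemma~\ref{svrglem:gap of wr} are routine; the real subtlety is calibrating the auxiliary parameter $\beta$ to the scale $\beta\asymp\eta$ (a constant $\beta$ would not work since $\rho=1+O(\eta)$), so that the $1+\beta$ overhead stays compatible with $\rho$ while the $(1+\tfrac1\beta)L^2\eta^2$ term is absorbed into the slack $\tfrac{\rho-1}{\rho}-\eta$ left by the hypothesis of Lemma~\ref{techlemma:svrg}. Lining up the constants ($8$ versus $9$, $\tau$ versus $\tau+1$, $\rho(\rho^\tau-1)$ versus $\rho^{\tau+1}-1$) is where the bookkeeping lives.
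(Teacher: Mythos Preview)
Your proof is correct and follows essentially the same route as the paper: bound $q(\hat\u_{t,m})$ by $q(\u_{t,m})$ plus $L^2\|\hat\u_{t,m}-\u_{t,m}\|^2$ via a Young-type inequality with parameter of order~$\eta$, apply Lemma~\ref{svrglem:gap of wr} to close the self-referential inequality, and check the resulting constant against the hypothesis of Lemma~\ref{techlemma:svrg}. The only cosmetic difference is that the paper places the small weight on the $\hat\u$ side (writing $\|\p_i(\hat\u)\|^2-\|\p_i(\u)\|^2\le \eta\|\p_i(\hat\u)\|^2+\tfrac{1}{\eta}L^2\|\hat\u-\u\|^2$), which yields the coefficient $\tfrac{4\eta\tau L^2\rho(\rho^\tau-1)}{\rho-1}$ directly and the cleaner final factor $\bigl(1-\eta-\tfrac{4\eta\tau L^2\rho(\rho^\tau-1)}{\rho-1}\bigr)^{-1}<\rho$, whereas your symmetric splitting picks up an extra factor of~$2$ that you then absorb into the $9$-vs-$8$ slack.
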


Combining Lemma \ref{svrglem:gap of wr} and Lemma \ref{svrglem:expqum}, we can directly obtain:
\begin{align}\label{bound of wr}
  \EB\left\| \hat{\u}_{t,m} - \u_{t,m} \right\|^2 \leq \frac{4\eta^2\tau \rho^2(\rho^\tau-1)}{\rho-1}\EB q(\u_{t,m}).
\end{align}

\begin{theorem}\label{the:AsySVRG}
We define $c_m = c_{m+1}(1+\beta\eta) + 2L^2\eta^2 h_{m+1}$, $h_m = (\frac{\eta L^2}{2} + \frac{2c_m\eta}{\beta})\frac{4\tau\rho^2(\rho^{\tau}-1)}{\rho-1} + (c_m\rho + \frac{L\rho}{2})$ with $c_0, \beta>0$. Furthermore, we choose $c_0, \eta, \beta$ such that $\gamma = \min \frac{\alpha\eta}{2} - \frac{2c_{m+1}\eta}{\beta} - 2\eta^2 h_{m+1} > 0$ and $c_{\tilde{M}} = 0$, where $\tilde{M} = M \times p$. Then we have
\begin{align}
	\frac{1}{T\tilde{M}}\sum_{t=0}^{T-1}\sum_{m=0}^{\tilde{M}-1}\EB\| \nabla f(\u_{t,m}) \|^2 \leq \frac{\EB f(\w_0) - \EB f(\w_{T})}{T\tilde{M}\gamma}. \nonumber
\end{align}
\end{theorem}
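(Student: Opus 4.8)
The plan is to run a one-step descent argument on the Lyapunov function $R_{t,m} = \EB f(\u_{t,m}) + c_m\EB\|\u_{t,m} - \u_{t,0}\|^2$ for each fixed outer iteration $t$, prove the contraction $R_{t,m+1} \le R_{t,m} - \gamma\EB\|\nabla f(\u_{t,m})\|^2$, and then telescope twice. The sequences $c_m$ and $h_m$ in the statement are exactly the bookkeeping quantities that make this telescoping close, so the proof is really a matter of (i) producing the per-step inequality and (ii) collecting constants into the stated form.

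First I would bound the drift of the function value. Applying Assumption~\ref{ass:smooth} to the write step (\ref{AsySVRG:update}), taking conditional expectation given $\u_{t,m},\hat\u_{t,m}$, and using that $i_{t,m}$ makes $\hat\v_{t,m}$ unbiased for $\nabla f(\hat\u_{t,m})$ while $\B_{t,m}$ is independent of $i_{t,m}$ with mean $\B$, gives a term $-\eta\nabla f(\u_{t,m})^T\B\nabla f(\hat\u_{t,m})$ plus $\tfrac{L\eta^2}{2}\EB\|\B_{t,m}\hat\v_{t,m}\|^2 \le \tfrac{L\eta^2}{2}\EB q(\hat\u_{t,m})$ (using $\|\B_{t,m}\x\|\le\|\x\|$ and $\EB\|\hat\v_{t,m}\|^2=\EB q(\hat\u_{t,m})$). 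Lemma~\ref{techlemma:tech} then turns the inner-product term into $-\tfrac{\alpha\eta}{2}\|\nabla f(\u_{t,m})\|^2 + \tfrac{L^2\eta}{2}\|\u_{t,m}-\hat\u_{t,m}\|^2$. Next I would bound the drift of $c_{m+1}\EB\|\u_{t,m+1}-\u_{t,0}\|^2$: expanding the square along (\ref{AsySVRG:update}) and applying Young's inequality with parameter $\beta$ to the cross term yields $c_{m+1}(1+\beta\eta)\EB\|\u_{t,m}-\u_{t,0}\|^2$, a multiple $\tfrac{c_{m+1}\eta}{\beta}$ of $\|\nabla f(\hat\u_{t,m})\|^2$ (which, split around $\nabla f(\u_{t,m})$ via Lipschitzness, feeds both the $\tfrac{2c_{m+1}\eta}{\beta}\|\nabla f(\u_{t,m})\|^2$ term that shows up in $\gamma$ and an extra $\|\u_{t,m}-\hat\u_{t,m}\|^2$ term), and $c_{m+1}\eta^2\EB q(\hat\u_{t,m})$.

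Adding the two drifts, the only non-telescoping pieces are a coefficient times $\EB\|\u_{t,m}-\hat\u_{t,m}\|^2$ and a coefficient times $\EB q(\hat\u_{t,m})$. I would eliminate the first with the asynchronous-gap bound (\ref{bound of wr}), $\EB\|\u_{t,m}-\hat\u_{t,m}\|^2\le\tfrac{4\eta^2\tau\rho^2(\rho^\tau-1)}{\rho-1}\EB q(\u_{t,m})$, and the second with Lemma~\ref{svrglem:expqum}, $\EB q(\hat\u_{t,m})<\rho\EB q(\u_{t,m})$; this collects everything onto $\EB q(\u_{t,m})$ with total coefficient $\eta h_{m+1}$. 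Finally invoking the variance-reduction Lemma~\ref{svrglem:var reduce}, $q(\u_{t,m})\le 2L^2\|\u_{t,m}-\u_{t,0}\|^2 + 2\|\nabla f(\u_{t,m})\|^2$, splits $\eta h_{m+1}\EB q(\u_{t,m})$ into the $2L^2\eta^2 h_{m+1}$ correction to the $\|\u_{t,m}-\u_{t,0}\|^2$ coefficient (which is precisely the recursion $c_m = c_{m+1}(1+\beta\eta)+2L^2\eta^2 h_{m+1}$) and a $2\eta^2 h_{m+1}$ loss on the gradient term, so the net gradient coefficient is $\tfrac{\alpha\eta}{2}-\tfrac{2c_{m+1}\eta}{\beta}-2\eta^2 h_{m+1}\ge\gamma$. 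This establishes $R_{t,m+1}\le R_{t,m}-\gamma\EB\|\nabla f(\u_{t,m})\|^2$.

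To close, I would sum over $m=0,\ldots,\tilde M-1$: since $c_{\tilde M}=0$, $\u_{t,0}=\w_t$, and $\u_{t,\tilde M}=\w_{t+1}$, the Lyapunov boundary terms collapse to $\EB f(\w_t)-\EB f(\w_{t+1})$, giving $\gamma\sum_{m=0}^{\tilde M-1}\EB\|\nabla f(\u_{t,m})\|^2\le\EB f(\w_t)-\EB f(\w_{t+1})$; summing over $t=0,\ldots,T-1$ telescopes once more to $\EB f(\w_0)-\EB f(\w_T)$, and dividing by $T\tilde M\gamma$ gives the stated bound. I expect the main obstacle to be the constant tracking in the middle step — verifying that after substituting (\ref{bound of wr}), Lemma~\ref{svrglem:expqum} and Lemma~\ref{svrglem:var reduce} the leftover coefficients are \emph{exactly} the $c_m$ and $h_m$ of the statement — together with the accompanying feasibility check: one runs the recursion backward from $c_{\tilde M}=0$ and must show that $\eta$ and $\beta$ (with $\rho$ constrained by Lemma~\ref{techlemma:svrg}) can be chosen so that $c_m$ stays $O(1)$ uniformly in $m$ (controlling $(1+\beta\eta)^{\tilde M}$) and hence $\gamma>0$.
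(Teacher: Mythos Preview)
Your proposal is correct and follows essentially the same route as the paper: define the Lyapunov function $R_{t,m}=\EB f(\u_{t,m})+c_m\EB\|\u_{t,m}-\u_{t,0}\|^2$, bound the $f$-drift via smoothness and Lemma~\ref{techlemma:tech}, bound the distance-drift via Young's inequality with parameter $\beta$ (then split $\|\nabla f(\hat\u_{t,m})\|^2$ around $\nabla f(\u_{t,m})$), absorb the read/write gap using (\ref{bound of wr}) and Lemma~\ref{svrglem:expqum}, and finally apply Lemma~\ref{svrglem:var reduce} to route $\EB q(\u_{t,m})$ back into the Lyapunov terms, which produces exactly the $c_m$-recursion and the $\gamma$ coefficient before the double telescope. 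The only slip is a typo: the collected coefficient on $\EB q(\u_{t,m})$ is $\eta^2 h_{m+1}$, not $\eta h_{m+1}$ (your subsequent $2L^2\eta^2 h_{m+1}$ and $2\eta^2 h_{m+1}$ are already right), and the feasibility check you flag is handled in the paper's separate ``Computation Complexity'' subsection rather than inside the proof.
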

\begin{proof}
In the $t^{th}$ outer-loop, similar to ~\cite{sashanksvrg16}, we define $R_{t,m}$ as follows
\begin{align}
	R_{t,m} = f(\u_{t,m}) + c_m \| \u_{t,m} - \u_{t,0} \|^2. \nonumber
\end{align}

Then $\forall \beta>0$,
\begin{align}\label{ineq:part1}
	 &\EB[\| \u_{t,m+1} - \u_{t,0} \|^2|\u_{t,m},\hat{\u}_{t,m}] \nonumber \\
\leq &\EB\| \u_{t,m+1} - \u_{t,m} \|^2 + \| \u_{t,m} - \u_{t,0} \|^2 \nonumber \\
     &- 2\eta(\EB\B_{t,m}\hat{\v}_{t,m})^T(\u_{t,m} - \u_{t,0}) \nonumber \\
\leq &\eta^2 \EB\| \hat{\v}_{t,m} \|^2 + (1+\beta\eta)\| \u_{t,m} - \u_{t,0} \|^2 \nonumber \\
&+ \frac{\eta}{\beta}\|\nabla f(\hat{\u}_{t,m})\|^2 \nonumber \\
\leq &\eta^2 \EB\| \hat{\v}_{t,m} \|^2 + (1+\beta\eta)\| \u_{t,m} - \u_{t,0} \|^2 \nonumber \\
     &+ \frac{2\eta}{\beta}(\|\nabla f(\u_{t,m})\|+\|\nabla f(\hat{\u}_{t,m})-\nabla f(\u_{t,m})\|^2) \nonumber \\
\leq &\eta^2 \EB\| \hat{\v}_{t,m} \|^2 + (1+\beta\eta)\| \u_{t,m} - \u_{t,0} \|^2 \nonumber \\
     &+ \frac{2\eta}{\beta}(\|\nabla f(\u_{t,m})\|^2 \nonumber \\
     &+L^2\|\hat{\u}_{t,m} - \u_{t,m}\|^2),
\end{align}
where the second inequality uses the fact $2ab \leq \beta a^2+\frac{1}{\beta}b^2$. Since the objective function is $L$-smooth, we have
\begin{align}\label{ineq:part2}
	      &\EB[f(\u_{t,m+1})|\u_{t,m},\hat{\u}_{t,m}] \nonumber \\
	 \leq &- \eta\EB[\nabla f(\u_{t,m})^T\B_{t,m} \nabla f_{i_{t,m}}(\hat{\u}_{t,m})|\u_{t,m},\hat{\u}_{t,m}] \nonumber \\
          &+ f(\u_{t,m}) + \frac{L\eta^2}{2}\EB[\| \hat{\v}_{t,m} \|^2|\u_{t,m},\hat{\u}_{t,m}] \nonumber \\
        = &f(\u_{t,m}) - \eta\nabla f(\u_{t,m})^T\B\nabla f(\hat{\u}_{t,m}) \nonumber \\
          &+ \frac{L\eta^2}{2}\EB[\| \hat{\v}_{t,m} \|^2|\u_{t,m},\hat{\u}_{t,m}] \nonumber \\
	 \leq &f(\u_{t,m}) - \frac{\alpha\eta}{2}\| \nabla f(\u_{t,m}) \|^2 \nonumber \\
&+ \frac{\eta L^2}{2}\| \u_{t,m} - \hat{\u}_{t,m} \|^2 \nonumber \\
          &+ \frac{L\eta^2}{2}\EB[\| \hat{\v}_{t,m} \|^2|\u_{t,m},\hat{\u}_{t,m}],
\end{align}
where the first equality uses the independence of $\B_{t,m}, i_{t,m}$, the second inequality uses Lemma~\ref{techlemma:tech}.
Combining (\ref{ineq:part1}) and (\ref{ineq:part2}), we have
\begin{align}
	 &\EB R_{t,m+1} \nonumber \\
=    &\EB f(\u_{t,m+1}) + c_{m+1} \| \u_{t,m+1} - \u_{t,0} \|^2 \nonumber \\
\leq &\EB f(\u_{t,m}) - (\frac{\alpha\eta}{2} - \frac{2c_{m+1}\eta}{\beta})\EB \| \nabla f(\u_{t,m}) \|^2 \nonumber \\
     &+ (\frac{\eta L^2}{2} + \frac{2c_{m+1}\eta}{\beta})\EB\| \u_{t,m} - \hat{\u}_{t,m} \|^2 \nonumber \\
     &+ c_{m+1}(1+\beta\eta)\EB\| \u_{t,m} - \u_{t,0} \|^2 \nonumber \\
	 &+ \eta^2(c_{m+1} + \frac{L}{2})\EB\| \hat{\v}_{t,m} \|^2 \nonumber \\
\leq &\EB f(\u_{t,m}) - (\frac{\alpha\eta}{2} - \frac{2c_{m+1}\eta}{\beta})\EB\| \nabla f(\u_{t,m}) \|^2 \nonumber \\
	 &+ (\frac{\eta L^2}{2} + \frac{2c_{m+1}\eta}{\beta})\frac{4\tau\eta^2\rho^2(\rho^{\tau}-1)}{\rho-1}\EB q(\u_{t,m}) \nonumber \\
     &+ c_{m+1}(1+\beta\eta)\EB\| \u_{t,m} - \u_{t,0} \|^2 \nonumber \\
     &+ \eta^2(c_{m+1} + \frac{L}{2})\EB\| \hat{\v}_{t,m} \|^2 ,\nonumber
\end{align}
where the last inequality uses equation (\ref{bound of wr}).

For convenience, we use $h_m = (\frac{\eta L^2}{2} + \frac{2c_m\eta}{\beta})\frac{4\tau\rho^2(\rho^{\tau}-1)}{\rho-1} + \rho(c_m + \frac{L}{2})$. Since $\EB[\|\hat{\v}_{t,m}\|^2]=\EB q(\hat{\u}_{t,m}) \leq \rho\EB q(\u_{t,m})$, we have
\begin{align}
	 &\EB R_{t,m+1} \nonumber \\
\leq &\EB f(\u_{t,m}) - (\frac{\alpha\eta}{2} - \frac{2c_{m+1}\eta}{\beta})\EB\| \nabla f(\u_{t,m}) \|^2 \nonumber \\
	 &+ c_{m+1}(1+\beta\eta)\EB\| \u_{t,m} - \u_{t,0} \|^2 + \eta^2 h_{m+1}\EB q(\u_{t,m}) \nonumber \\
\leq &\EB f(\u_{t,m}) \nonumber \\
 &+ [c_{m+1}(1+\beta\eta) + 2L^2\eta^2 h_{m+1}]\EB\| \u_{t,m} - \u_{t,0} \|^2 \nonumber \\
	 & - (\frac{\alpha\eta}{2} - \frac{2c_{m+1}\eta}{\beta} - 2\eta^2 h_{m+1})\EB \| \nabla f(\u_{t,m}) \|^2 , \nonumber
\end{align}
where the second inequality uses Lemma \ref{svrglem:var reduce}. Then we can obtain:
\begin{align}
	&(\frac{\alpha\eta}{2} - \frac{2c_{m+1}\eta}{\beta} - 2\eta^2 h_{m+1})\EB\| \nabla f(\u_m) \|^2 \nonumber \\
&\leq \EB R_m - \EB R_{m+1}, \nonumber
\end{align}
where $c_m = c_{m+1}(1+\beta\eta) + 2L^2\eta^2 h_{m+1}$.

We set $c_0>0$. It is easy to see that $c_m > c_{m+1}$. We can choose $c_0, \eta, \beta$ to make $c_{\tilde{M}} = 0$. Then we have:
\begin{align}
	&\sum_{m=0}^{\tilde{M}-1}\EB\| \nabla f(\u_{t,m}) \|^2 \nonumber \\
&\leq \frac{\EB R_0 - \EB R_{\tilde{M}}}{\gamma} = \frac{\EB f(\w_t) - \EB f(\w_{t+1})}{\gamma}, \nonumber
\end{align}
which is equivalent to
\begin{align}
	\frac{1}{T\tilde{M}}\sum_{t=0}^{T-1}\sum_{m=0}^{\tilde{M}-1}\EB\| \nabla f(\u_{t,m}) \|^2 \leq \frac{\EB f(\w_0) - \EB f(\w_{T})}{T\tilde{M}\gamma}. \nonumber
\end{align}
\end{proof}

\subsection{Computation Complexity}
In Theorem \ref{the:AsySVRG}, we construct a sequence $\{c_m\}$ and need $\gamma > 0$. According to the definition of $h_m$, we can write $h_m$ as $h_m = gc_m + f$, where $g= \frac{2\eta}{\beta} \frac{4\tau\rho^2(\rho^{\tau}-1)}{\rho-1} + \rho, f = \frac{\eta L^2}{2} \frac{4\tau\rho^2(\rho^{\tau}-1)}{\rho-1} + \frac{L\rho}{2}$ are constants.

First, we choose $\beta >\eta$, then both $g, f$ are bounded positive constants. We have
\begin{align}
	c_m = c_{m+1}(1+\beta\eta+2L^2\eta^2g) + 2L^2\eta^2f . \nonumber
\end{align}
Let $a =\beta\eta+2L^2\eta^2g$. Because $c_{\tilde{M}} = 0$, it is easy to get
\begin{align}
	c_0 = 2L^2\eta^2f\frac{(1+a)^{\tilde{M}}-1}{a}. \nonumber
\end{align}
We take $\tilde{M} = \lfloor\frac{1}{a}\rfloor \leq \frac{1}{a}$, then we have $c_0 \leq \frac{4L^2\eta^2f}{a}$ and
\begin{align}
	\gamma =    & \frac{\alpha}{2}(\eta - \frac{4c_0\eta}{\alpha\beta} -\frac{4gc_0}{\alpha}\eta^2 - \frac{4f}{\alpha}\eta^2) . \nonumber
\end{align}
As recommended in~\cite{sashanksvrg16}, we can take $\eta = \mu/n^{2/3}, \beta = v/n^{1/3}$ with $\eta < \beta$ (assuming $n$ is large). Then we can get $f = O(1), g = O(1), a = O(1/n)$. By choosing $\mu,v$ to satisfy $\frac{16L^2f\mu}{\alpha v^2}<1$ such that $\frac{4c_0}{\alpha\beta}<1$, it is easy to find that $\gamma = O(1/n^{2/3})>0$, $\tilde{M} = O(n)$. Hence, to get an $\epsilon$-local optimal solution, the computation complexity by all $p$ threads is $O(n^{\frac{2}{3}}/\epsilon)$, and the computation complexity of each thread is $O(\frac{n^{\frac{2}{3}}}{p\epsilon})$.

\section{Experiment}
To verify our theoretical results about Hogwild! and AsySVRG, we use a fully-connected neural network to construct a non-convex function. The neural network has one hidden layer with 100 nodes and the sigmoid function is used for the activation function. We use the soft-max output and a $L_2$ regularization for training. The loss function is:
\begin{align}
	f(\w,\b) = -\frac{1}{n}\sum_{i=1}^n \sum_{k=1}^K \1\{y_i=k\}\log  o_i^{(k)} + \frac{\lambda}{2}\|\w\|^2, \nonumber
\end{align}
where $\w$ is the weights of the neural network, $\b$ is the bias, $y_i$ is the label of instance $\x_i$, $o_i^{(k)}$ is the output corresponding to $\x_i$, $K$ is the total number of class labels.

We use two datasets: connect-4 and MNIST\footnote{https://www.csie.ntu.edu.tw/$\sim$cjlin/libsvmtools/datasets/} to do experiments and $\lambda = 10^{-3}$. We initialize $\w$ by randomly sampling from a Gaussian distribution with mean being 0 and variance being 0.01, and initialize $\b = 0$. During training, we use a fixed stepsize for both Hogwild! and AsySVRG. The stepsize is chosen from $\{0.1, 0.05, 0.01, 0.005, 0.001, 0.0005,0.0001\}$, and the best is reported. For the iteration number of the inner-loop of AsySVRG, we set $M=n/p$, where $p$ is the number of threads. The experiments are conducted on a server with 12 Intel cores and 64G memory.

Figure~\ref{fig:compare} illustrates the convergence property of both Hogwild! and AsySVRG.  The x-axis denotes the CPU time, where we set the CPU time that Hogwild! passes through the whole dataset once with one thread as 1 unit. The y-axis denotes the training loss. In this experiment, we run Hogwild! and AsySVRG with 10 threads. Hogwild!-10 and AsySVRG-10 denote the corresponding methods with 10 threads. It is easy to see that both Hogwild! and AsySVRG are convergent. Furthermore, AsySVRG is faster than Hogwild!. This is consistent with our theoretical results in this paper.

\begin{figure}[!thb]
\begin{center}
\subfigure[MNIST]{\includegraphics[width=1.625in]{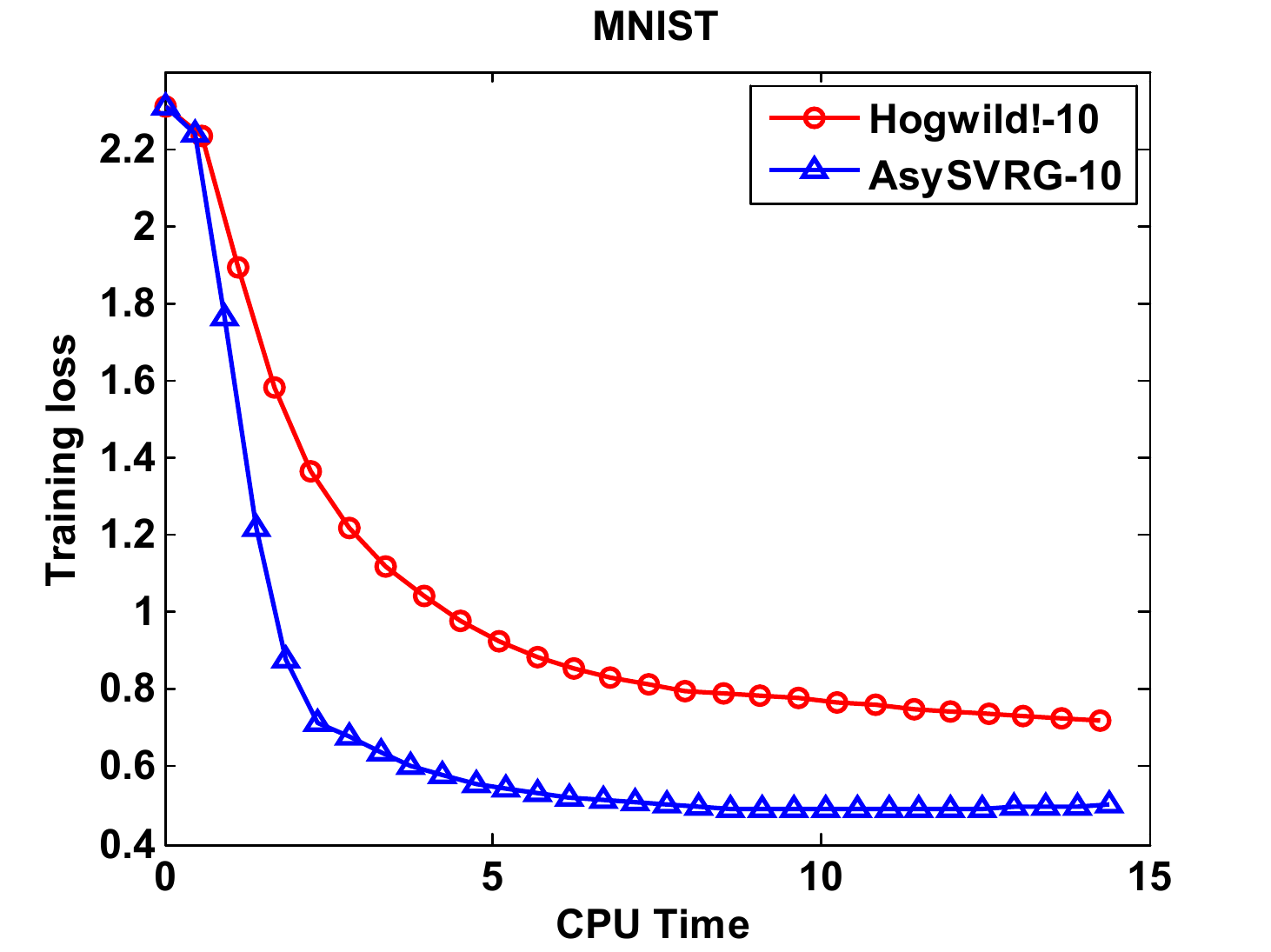}}
\subfigure[connect-4]{\includegraphics[width=1.625in]{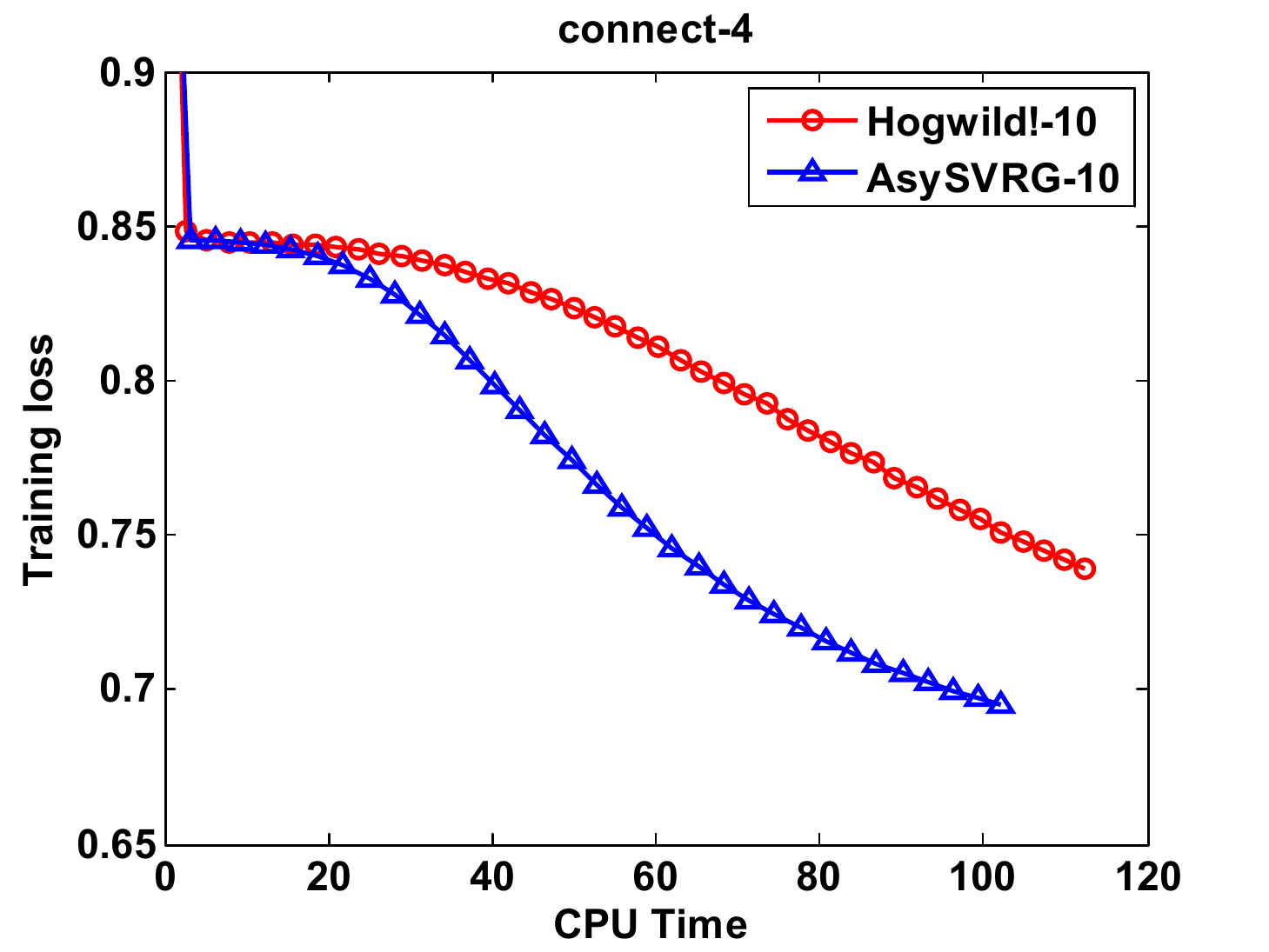}}
\end{center}
\caption{\small Hogwild! vs AsySVRG}
\label{fig:compare}
\end{figure}

Figure~\ref{fig:threads} reports the results of Hogwild! and AsySVRG with different numbers of threads, where the number of threads $p=1,4,10$. We can find that in most cases the two methods will become faster with the increase of threads. The only outlier is the case for  Hogwild! on dateset connect-4, Hogwild! using 4 threads is slower than using 1 thread. One possible reason is that we have two CPUs in our server, with 6 cores for each CPU. In the 4-thread case, different threads may be allocated on different CPUs, which will cause extra cost.

\begin{figure}[!hb]
\begin{center}
\subfigure[Hogwild! on MNIST]{\includegraphics[width=1.625in]{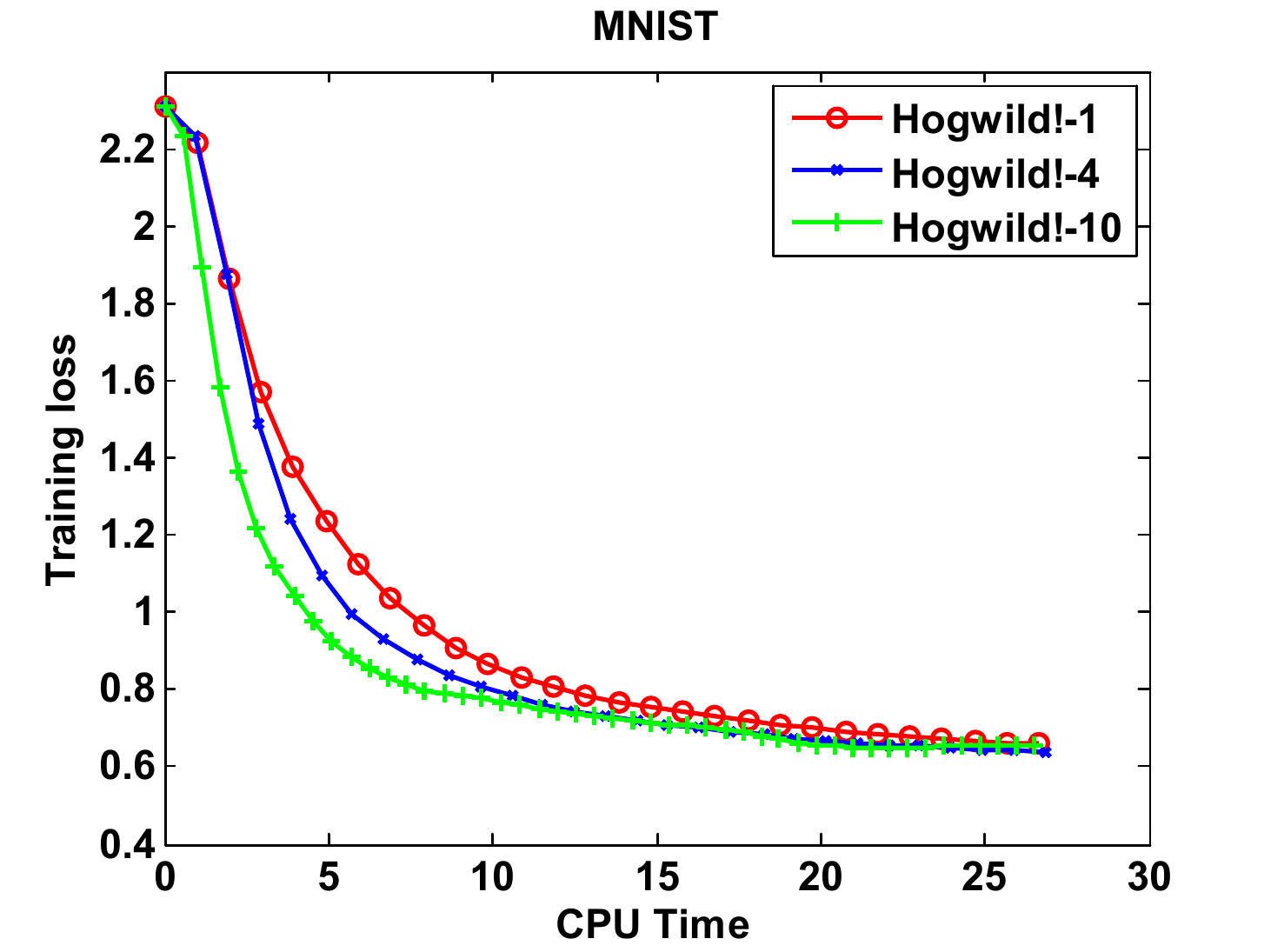}}
\subfigure[AsySVRG on MNIST]{\includegraphics[width=1.625in]{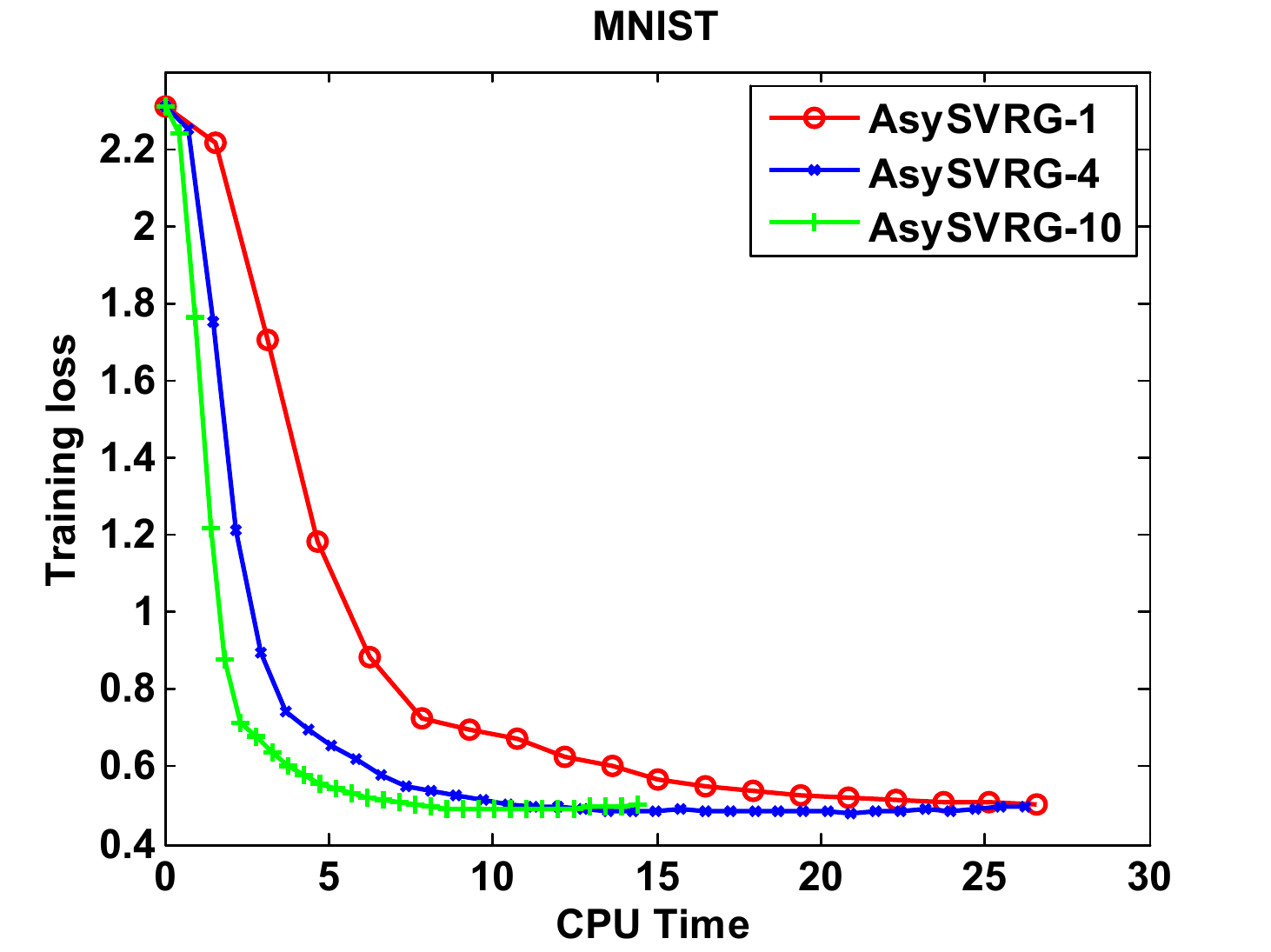}}
\subfigure[Hogwild! on connect-4]{\includegraphics[width=1.625in]{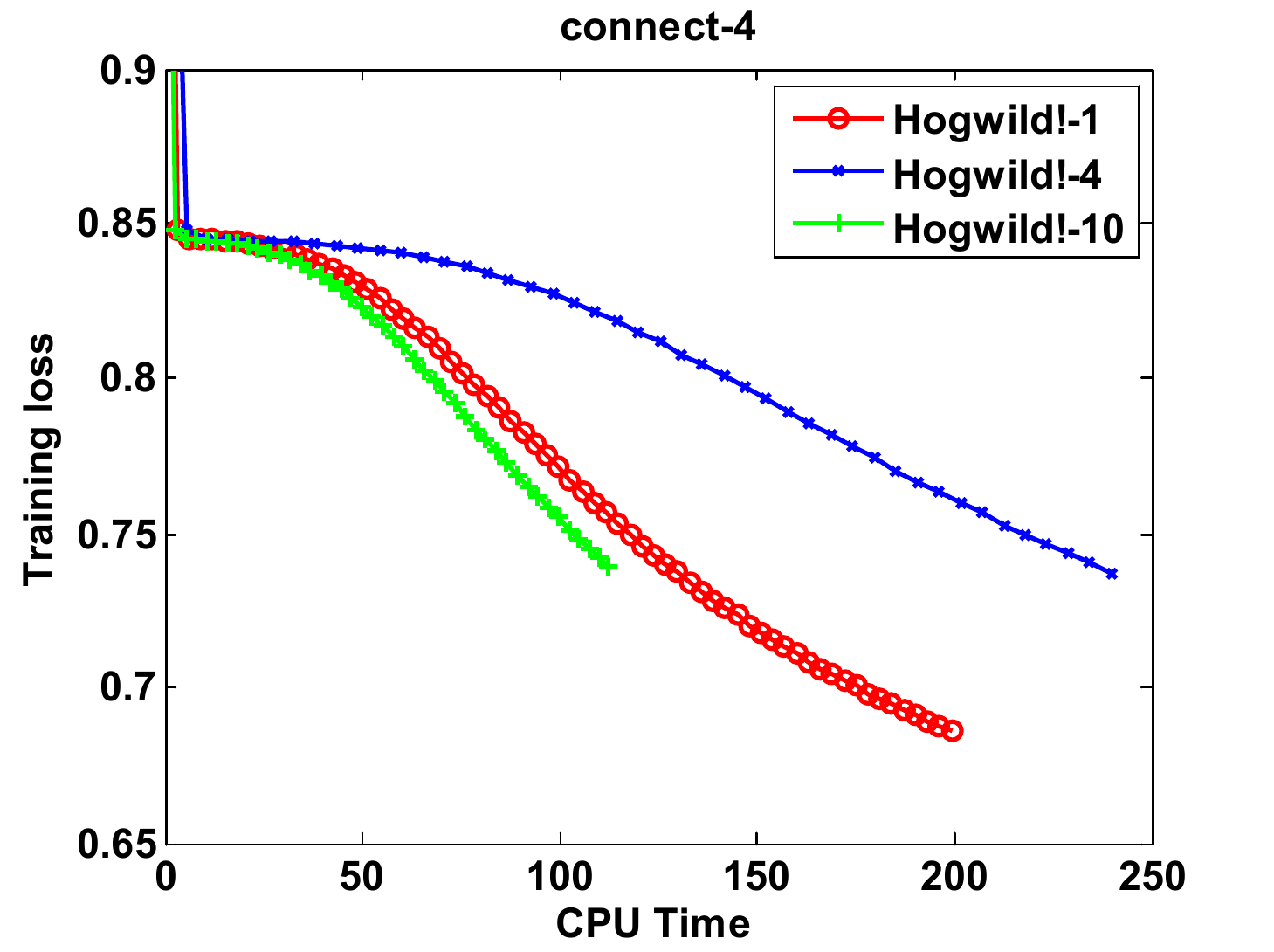}}
\subfigure[AsySVRG on connect-4]{\includegraphics[width=1.625in]{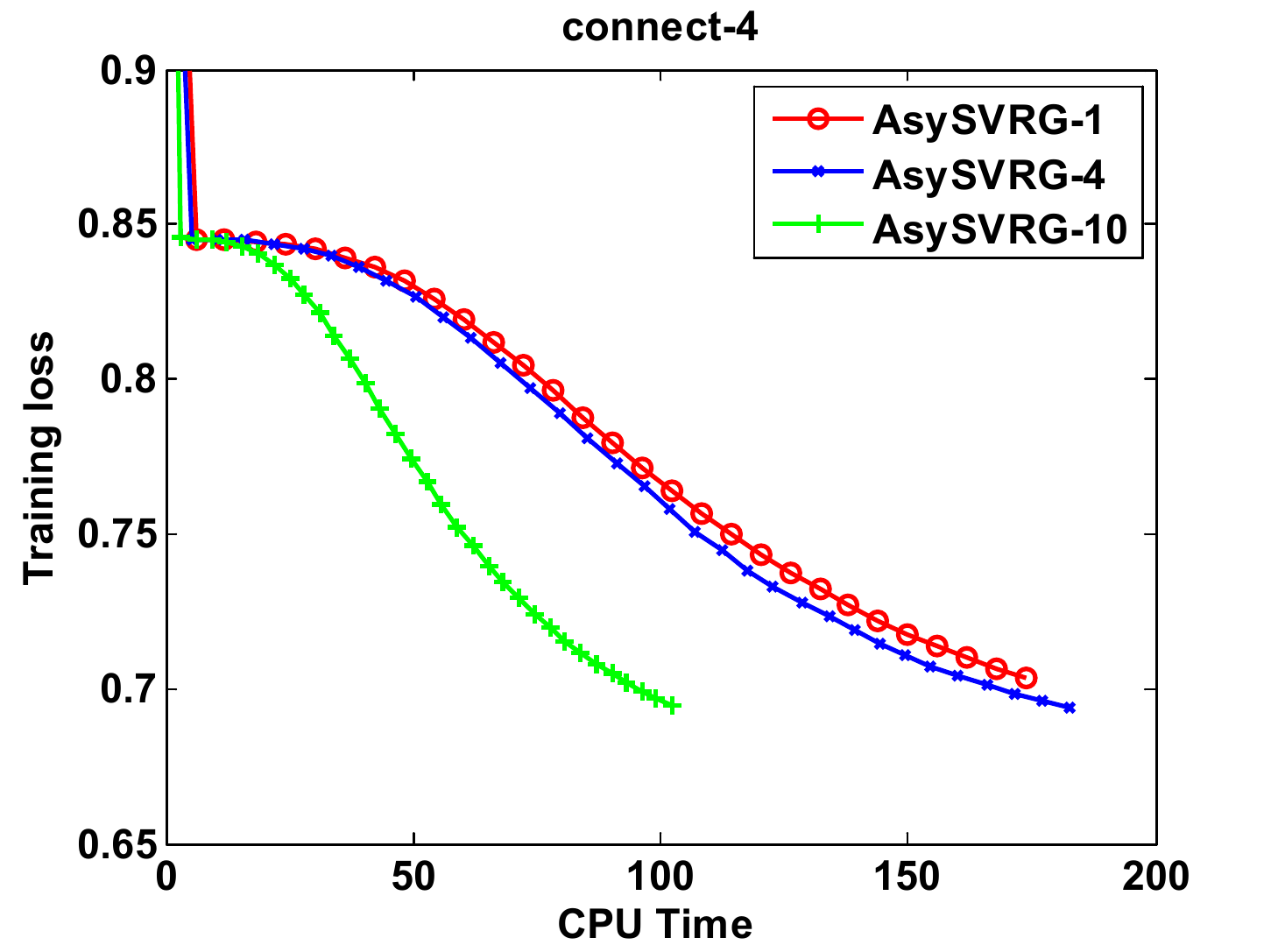}}
\end{center}
\caption{\small Comparison between different numbers of threads.}
\label{fig:threads}
\end{figure}

\section{Conclusion}
In this paper, we have provided theoretical proof about the convergence of two representative lock-free strategy based parallel SGD methods, Hogwild! and AsySVRG, for non-convex problems. Empirical results also show that both Hogwild! and AsySVRG are convergent on non-convex problems, which successfully verifies our theoretical results. To the best of our knowledge, this is the first work to prove the convergence of lock-free strategy based parallel SGD methods for non-convex problems.

\section{Acknowledgements}
{This work is partially supported by NSFC~(No.~61472182) and a fund from Tencent.}

\bibliographystyle{aaai}
\bibliography{ref}

\newpage
\appendix

\section{Appendix}

\subsection{Proof of Lemma \ref{techlemma:Hog}}

\begin{proof}
First, $\forall \x, \y$ and $r>0$, we have:
\begin{align}\label{basic eq}
     &\left\| \nabla f_{i}(\x) \right\|^2 - \left\| \nabla f_{i}(\y) \right\|^2 \nonumber \\
\leq &2\nabla f_{i}(\x)^T(\nabla f_{i}(\x) - \nabla f_{i}(\y)) \nonumber \\
\leq &\frac{1}{r}\left\| \nabla f_{i}(\x) \right\|^2 + r\left\| \nabla f_{i}(\x) - \nabla f_{i}(\y) \right\|^2 \nonumber \\
=    &\frac{1}{r}\left\| \nabla f_{i}(\x) \right\|^2 + r\left\| \nabla f_{i}(\x) - \nabla f_{i}(\y) \right\|^2 \nonumber \\
\leq &\frac{1}{r}\left\| \nabla f_{i}(\x) \right\|^2 + rL^2\left\| \x - \y \right\|^2
\end{align}
In the above equation, take $\x = \hat{\w}_t, \y=\hat{\w}_{t+1}$, we obtain:
\begin{align}
     &\left\| \nabla f_{i}(\hat{\w}_t) \right\|^2 - \left\| \nabla f_{i}(\hat{\w}_{t+1}) \right\|^2 \nonumber \\
\leq &\frac{1}{r} \left\| \nabla f_{i}(\hat{\w}_t) \right\|^2 + rL^2\left\| \hat{\w}_{t} - \hat{\w}_{t+1} \right\|^2 \nonumber
\end{align}
According to the definition of $\hat{\w}_t$, we have
\begin{align}
     &\left\| \hat{\w}_t - \hat{\w}_{t+1} \right\| \nonumber \\
=    &\| \w_{a(t)} - \eta\sum_{j=a(t)}^{t-1}\P_{t, j-a(t)}\nabla f_{i_j}(\hat{\w}_j) \nonumber \\
     &- (\w_{a(t+1)} - \eta\sum_{j=a(t+1)}^{t}\P_{t+1, j-a(t+1)}\nabla f_{i_j}(\hat{\w}_j)) \| \nonumber \\
\leq &\left\| \w_{a(t)} - \w_{a(t+1)} \right\| \nonumber \\
     &+ \eta \sum_{j=a(t)}^{t-1} \left\| \nabla f_{i_j}(\hat{\w}_j) \right\| + \eta \sum_{j=a(t+1)}^{t} \left\| \nabla f_{i_j}(\hat{\w}_j) \right\| \nonumber \\
\leq &\sum_{j=a(t)}^{a(t+1)-1} \left\| \w_j - \w_{j+1} \right\| \nonumber \\
     &+ \eta \sum_{j=a(t)}^{t-1} \left\| \nabla f_{i_j}(\hat{\w}_j) \right\| + \eta \sum_{j=a(t+1)}^{t} \left\| \nabla f_{i_j}(\hat{\w}_j) \right\| \nonumber \\
\leq &\eta\sum_{j=a(t)}^{a(t+1)-1} \left\| \nabla f_{i_j}(\hat{\w}_j) \right\| \nonumber \\
     &+ \eta \sum_{j=a(t)}^{t-1} \left\| \nabla f_{i_j}(\hat{\w}_j) \right\| + \eta \sum_{j=a(t+1)}^{t} \left\| \nabla f_{i_j}(\hat{\w}_j) \right\| \nonumber \\
\leq &3\eta \sum_{j=t-\tau}^{t} \left\| \nabla f_{i_j}(\hat{\w}_j) \right\| \nonumber.
\end{align}
Combining the two above equation, we obtain:
\begin{align}
     &\left\| \nabla f_{i}(\hat{\w}_t) \right\|^2 - \left\| \nabla f_{i}(\hat{\w}_{t+1}) \right\|^2 \nonumber \\
\leq &\frac{1}{r} \left\| \nabla f_{i}(\hat{\w}_t) \right\|^2 + 9r(\tau+1) L^2\eta^2 \sum_{j=t-\tau}^{t} \left\| \nabla f_{i_j}(\hat{\w}_j) \right\|^2 \nonumber
\end{align}
For any fixed $i$, we take expectation on the random index $i_j$, we obtain:
\begin{align}
     &\left\| \nabla f_{i}(\hat{\w}_t) \right\|^2 - \left\| \nabla f_{i}(\hat{\w}_{t+1}) \right\|^2 \nonumber \\
\leq &\frac{1}{r} \left\| \nabla f_{i}(\hat{\w}_t) \right\|^2 + 9r(\tau+1) L^2\eta^2 \sum_{j=t-\tau}^{t} q(\hat{\w}_j) \nonumber
\end{align}
Summing up $i$ from $1$ to $n$, we obtain:
\begin{align}
     &\EB q(\hat{\w}_t) - \EB q(\hat{\w}_{t+1}) \nonumber \\
\leq &\frac{1}{r}\EB q(\hat{\w}_t) + 9r(\tau+1) L^2\eta^2 \sum_{j=t-\tau}^{t} \EB q(\hat{\w}_j) \nonumber
\end{align}
Now, we prove the final result by induction and take $r = \frac{1}{\eta}$. When $t=0$, we have
\begin{align}
\EB q(\hat{\w}_0) \leq \frac{1}{1-\eta - 9\eta(\tau+1) L^2} \EB q(\hat{\w}_1) \leq \rho \EB q(\hat{\w}_1) \nonumber
\end{align}
Assuming the result is right for $t-1$, then we have
\begin{align}
     &\EB q(\hat{\w}_t) - \EB q(\hat{\w}_{t+1}) \nonumber \\
\leq &\frac{1}{r}\EB q(\hat{\w}_t) + 9\eta(\tau+1) L^2 \EB q(\hat{\w}_j)\sum_{j=t-\tau}^{t}\rho^{t-j} \nonumber
\end{align}
which means
\begin{align}
\EB q(\hat{\w}_t) \leq \frac{1}{1-\eta - \frac{9\eta(\tau+1) L^2(\rho^{\tau+1}-1)}{\rho-1}} \EB q(\hat{\w}_{t+1}) \leq \rho \EB q(\hat{\w}_{t+1}) \nonumber
\end{align}
\end{proof}

\subsection{Proof of Lemma \ref{Hoglem:gap of wr}}
\begin{proof}
\begin{align}
       & \|\w_t - \hat{\w}_t \| \nonumber \\
  =    & \|\w_t - \w_{a(t)} + \eta\sum_{j=a(t)}^{t-1}\P_{t, j-a(t)}\nabla f_{i_j}(\hat{\w}_j)\| \nonumber \\
  \leq & \|\w_t - \w_{a(t)}\| + \eta\sum_{j=a(t)}^{t-1}\| \nabla f_{i_j}(\hat{\w}_j)\| \nonumber \\
  \leq & \sum_{j=a(t)}^{t-1}\|\w_{j}-\w_{j+1}\| + \eta\sum_{j=a(t)}^{t-1}\| \nabla f_{i_j}(\hat{\w}_j)\| \nonumber \\
  \leq & 2\eta\sum_{j=a(t)}^{t-1}\|\nabla f_{i_j}(\hat{\w}_j)\| \nonumber
\end{align}
Then we get the result
\begin{align}
  \EB\|\w_t - \hat{\w}_t \|^2 \leq & 4\eta^2 \tau \sum_{j=a(t)}^{t-1}\EB\|\nabla f_{i_j}(\hat{\w}_j)\|^2 \nonumber \\
                              =    & 4\eta^2 \tau \sum_{j=a(t)}^{t-1}\EB q(\hat{\w}_j) \nonumber \\
                              \leq & \frac{4\eta^2 \tau\rho(\rho^{\tau}-1)}{\rho-1}\EB q(\hat{\w}_t)\nonumber
\end{align}
where the last inequality uses Lemma \ref{techlemma:Hog} in the appendix.
\end{proof}

\subsection{Proof of Lemma \ref{techlemma:svrg}}
\begin{proof}
First, $\forall \x, \y$ and $r>0$, we have:
\begin{align}\label{svrgbasic eq}
     &\left\| \p_{i}(\x) \right\|^2 - \left\| \p_{i}(\y) \right\|^2 \nonumber \\
\leq &2\p_{i}(\x)^T(\p_{i}(\x) - \p_{i}(\y)) \nonumber \\
\leq &\frac{1}{r}\left\| \p_{i}(\x) \right\|^2 + r\left\| \p_{i}(\x) - \p_{i}(\y) \right\|^2 \nonumber \\
=    &\frac{1}{r}\left\| \p_{i}(\x) \right\|^2 + r\left\| \p_{i}(\x) - \p_{i}(\y) \right\|^2 \nonumber \\
\leq &\frac{1}{r}\left\| \p_{i}(\x) \right\|^2 + rL^2\left\| \x - \y \right\|^2
\end{align}
In the above equation, take $\x = \hat{\u}_{t,m}, \y=\hat{\u}_{t,m+1}$, we obtain:
\begin{align}
     &\left\| \p_{i}(\hat{\u}_{t,m}) \right\|^2 - \left\| \p_{i}(\hat{\u}_{t,m+1}) \right\|^2 \nonumber \\
\leq &\frac{1}{r} \left\| \p_{i}(\hat{\u}_{t,m}) \right\|^2 + rL^2\left\| \hat{\u}_{t,m} - \hat{\u}_{t,m+1} \right\|^2 \nonumber
\end{align}
Furthermore, we can get
\begin{align}
     &\left\| \hat{\u}_{t,m} - \hat{\u}_{t,m+1} \right\| \nonumber \\
=    &\| \u_{t,a(m)} - \eta \sum_{i=a(m)}^{m-1} \P_{m,i-a(m)}^{(t)} \hat{\v}_{t,i} \nonumber \\
     &- (\u_{t,a(m+1)} - \eta \sum_{i=a(m+1)}^{m} \P_{m+1,i-a(m+1)}^{(t)} \hat{\v}_{t,i}) \| \nonumber \\
\leq &\left\| \u_{t,a(m)} - \u_{t,a(m+1)} \right\| + \eta \sum_{i=a(m)}^{m-1} \left\| \hat{\v}_{t,i} \right\| + \eta \sum_{i=a(m+1)}^{m} \left\| \hat{\v}_{t,i} \right\| \nonumber \\
\leq &\sum_{i=a(m)}^{a(m+1)-1} \left\| \u_{t,i} - \u_{t,i+1} \right\| + \eta \sum_{i=a(m)}^{m-1} \left\| \hat{\v}_{t,i} \right\| + \eta \sum_{i=a(m+1)}^{m} \left\| \hat{\v}_{t,i} \right\| \nonumber \\
\leq &\eta\sum_{i=a(m)}^{a(m+1)-1} \left\| \hat{\v}_{t,i} \right\| + \eta \sum_{i=a(m)}^{m-1} \left\| \hat{\v}_{t,i} \right\| + \eta \sum_{i=a(m+1)}^{m} \left\| \hat{\v}_{t,i} \right\| \nonumber \\
\leq &3\eta \sum_{i=m-\tau}^{m} \left\| \hat{\v}_{t,i} \right\| \nonumber.
\end{align}

Then, we take $r = \frac{1}{\eta}$ have
\begin{align}
     &\left\| \p_{i}(\hat{\u}_{t,m}) \right\|^2 - \left\| \p_{i}(\hat{\u}_{t,m+1}) \right\|^2 \nonumber \\
\leq &\eta \left\| \p_{i}(\hat{\u}_{t,m}) \right\|^2 + 9\eta (\tau+1) L^2\sum_{j=m-\tau}^{m} \left\| \hat{\v}_{t,j} \right\|^2 \nonumber
\end{align}
For any fixed $i$, we can take expectation for both sides of the above inequality and then sum $i$ from $1$ to $n$. Then we can get:
\begin{align}
     &\EB q(\hat{\u}_{t,m}) - \EB q(\hat{\u}_{t,m+1}) \nonumber \\
\leq &\eta \EB q(\hat{\u}_{t,m}) +  9\eta(\tau+1)L^2\sum_{j=m-\tau}^{m} \EB q(\hat{\u}_{t,j}) \nonumber
\end{align}
Here we use the fact that $\EB [\| \hat{\v}_{t,j} \|^2 | \hat{\u}_{t,j} ] = q(\hat{\u}_{t,j})$.

We prove our conclusion by induction. For convenience, we use $q_i$ to denote $\EB q(\hat{\u}_{t,i})$.

When $m=0$, we have
\begin{align}
q_0 \leq \frac{1}{1-\eta-9\eta(\tau+1)L^2} q_1 \leq \rho q_1 \nonumber
\end{align}
Assuming that $\forall m \leq M_0$, we have $q_{m-1} \leq \rho q_{m}$, then for $m = M_0$, we have
\begin{align}
q_m -& q_{m+1} \leq \eta q_m + 9\eta(\tau+1)L^2\sum_{j=m-\tau}^{m} q_j \nonumber \\
              \leq &\eta q_m + 9\eta(\tau+1)L^2 q_m \sum_{j=m-\tau}^{m} \rho^{m-j} \nonumber \\
              =    &\eta q_m + \frac{9\eta(\tau+1)L^2(\rho^{\tau+1}-1)}{\rho - 1}q_m \nonumber
\end{align}
which means that
\begin{align}
q_m \leq \frac{1}{1-\eta-\frac{9\eta(\tau+1)L^2(\rho^{\tau+1}-1)}{\rho-1}}q_{m+1} \leq \rho q_{m+1} \nonumber
\end{align}
\end{proof}

\subsection{Proof of Lemma \ref{svrglem:gap of wr}}
\begin{proof}
\begin{align}
     &\left\| \hat{\u}_{t,m} - \u_{t,m} \right\| \nonumber \\
   = &\left\| \u_{t,a(m)} - \eta \sum_{j=a(m)}^{m-1} \P_{m,j-a(m)}^{(t)} \hat{\v}_{t,j} - \u_{t,m} \right\| \nonumber \\
\leq &\left\| \u_{t,a(m)} - \u_{t,m} \right\| + \eta\sum_{j=a(m)}^{m-1} \left\| \hat{\v}_{t,j} \right\|  \nonumber \\
\leq &\sum_{j=a(m)}^{m-1} \left\| \u_{t,j} - \u_{t,j+1} \right\| + \eta \sum_{j=a(m)}^{m-1} \left\| \hat{\v}_{t,j} \right\|  \nonumber \\
\leq &\eta \sum_{j=a(m)}^{m-1} \left\| \hat{\v}_{t,j} \right\| + \eta \sum_{j=a(m)}^{m-1} \left\| \hat{\v}_{t,j} \right\|  \nonumber \\
\leq &2\eta\sum_{j=a(m)}^{m-1} \left\| \hat{\v}_{t,j} \right\| \label{hatu minus u}
\end{align}
Then we get the result
\begin{align}
  \EB\left\| \hat{\u}_{t,m} - \u_{t,m} \right\|^2 \leq &4\eta^2\tau \sum_{j=a(m)}^{m-1} \EB\left\| \hat{\v}_{t,j} \right\|^2 \nonumber \\
                                                     = &4\eta^2\tau \sum_{j=a(m)}^{m-1} \EB q(\hat{\u}_{t,j}) \nonumber \\
                                                  \leq &\frac{4\eta^2\tau \rho(\rho^\tau-1)}{\rho-1}\EB q(\hat{\u}_{t,m}) \nonumber
\end{align}
where the last inequality uses Lemma \ref{techlemma:svrg}.
\end{proof}

\subsection{Proof of Lemma \ref{svrglem:expqum}}
\begin{proof}
First, for any fixed $i$, we have
\begin{align}
     &\left\| \p_{i}(\hat{\u}_{t,m}) \right\|^2 - \left\| \p_{i}(\u_{t,m}) \right\|^2 \nonumber \\
\leq &2\p_{i}(\x)^T(\p_{i}(\hat{\u}_{t,m}) - \p_{i}(\u_{t,m})) \nonumber \\
\leq &\eta\left\| \p_{i}(\hat{\u}_{t,m}) \right\|^2 + \frac{1}{\eta}\left\| \p_{i}(\hat{\u}_{t,m}) - \p_{i}(\u_{t,m}) \right\|^2 \nonumber \\
=    &\eta\left\| \p_{i}(\hat{\u}_{t,m}) \right\|^2 + \frac{1}{\eta}\left\| \p_{i}(\hat{\u}_{t,m}) - \p_{i}(\u_{t,m}) \right\|^2 \nonumber \\
\leq &\eta\left\| \p_{i}(\hat{\u}_{t,m}) \right\|^2 + \frac{1}{\eta}L^2\left\| \hat{\u}_{t,m} - \u_{t,m} \right\|^2
\end{align}
Takeing expectation on both sides, using Lemma \ref{svrglem:gap of wr} and then summing up $i$ from $1$ to $n$ and we can get
\begin{align}
     &\EB q(\hat{\u}_{t,m}) - \EB q(\u_{t,m}) \nonumber \\
\leq &\eta\EB q(\hat{\u}_{t,m}) + \frac{4\eta\tau L^2\rho(\rho^\tau-1)}{\rho-1} \EB q(\hat{\u}_{t,m}) \nonumber
\end{align}
which means that
\begin{align}
\EB q(\hat{\u}_{t,m}) \leq \frac{1}{1-\eta-\frac{4\eta\tau L^2\rho(\rho^\tau-1)}{\rho-1}}\EB q(\u_{t,m}) < \rho \EB q(\u_{t,m}) \nonumber
\end{align}
\end{proof}

\end{document}